\newtheorem{problem}{Problem}[section]
\newtheorem{theorem}{Theorem}[section]
\newtheorem{example}{Example}[section]
\newtheorem{remark}{Remark}[section]
\newtheorem{assumption}{Assumption}[section]
\newtheorem{definition}{Definition}[section]
\definecolor{wheat}{rgb}{0.96,0.87,0.70}
\DeclareMathOperator*{\argmin}{arg\,min}
\newif\ifconfidential
\title{\LARGE \bf
Adaptive Sampling-based Motion Planning with Control Barrier Functions
}
\author{Ahmad Ahmad, Calin Belta, and Roberto Tron}
\begin{document}

\maketitle
\thispagestyle{empty}
\pagestyle{empty}

\begin{abstract}

Sampling-based algorithms, such as Rapidly Exploring Random Trees (RRT) and its variants, have been used extensively for motion planning. 
Control barrier functions (CBFs) have been recently proposed to synthesize controllers for safety-critical systems. In this paper, we combine the effectiveness of RRT-based algorithms with the safety guarantees provided by CBFs in a method called CBF-RRT$^\ast$. CBFs are used for local trajectory planning for RRT$^\ast$, avoiding explicit collision checking of the extended paths. We prove that CBF-RRT$^\ast$ preserves the probabilistic completeness of RRT$^\ast$. Furthermore, in order to improve the sampling efficiency of the algorithm, we equip the algorithm with an adaptive sampling procedure, which is based on the cross-entropy method (CEM) for importance sampling (IS). The procedure exploits the tree of samples to focus the sampling in promising regions of the configuration space. We demonstrate the efficacy of the proposed algorithms through simulation examples.

\end{abstract}

\section{Introduction}
Many state-of-the-art single query motion planning algorithms rely on randomized sampling to explore the configuration space, and build a path from a starting point to a goal region incrementally. Such algorithms are appealing for two main reasons. First, they avoid building the configuration space explicitly, which might be challenging in high-dimensional spaces. Rather, in the search for a path to the goal, they generate exploration paths and check if they do not coincide with obstacles. Second, given their exploratory nature and the fact that paths are typically built incrementally, one can impose differential constraints on the exploratory samples to generate paths that are dynamically feasible.

Rapidly-exploring random trees (RRT) \cite{lavalle1998rapidly} and its variants (see, e.g., \cite{LaValle_RRTconnect,Branicky2003,lavalle2001randomized,Yang2019h}) are sampling-based motion planning algorithms are simple to implement and are probabilistically complete \cite{RRTcompleteness}. RRTs aim to rapidly explore the configuration space and build a tree rooted at a starting configuration to find a path to a goal region. Karaman and Frazzoli \cite{KaramanRRTstarIJRR} proposed RRT$^\ast$, where each newly added vertex to the RRT tree is rewired with a possible better connection, for which the cost to reach the rewired vertex from the root vertex is reduced. This approach makes a path found with RRT$^\ast$ asymptotically optimal \cite{rev_asymptoticRRTstar}. Given its success in motion planning, in the past decade, there has been a  large number of research efforts to improve RRT$^\ast$ sampling. Examples include informed-RRT$^\ast$ \cite{Gammell2014} and its variant batch-informed-RRT$^\ast$ \cite{Gammell2015}, which construct an informed elliptical sampling region that shrinks as the length of the path decreases, which leads to faster convergence to the optimal path. 
Kobilarov \cite{Kobilarov2012e} introduced CE-RRT$^\ast$, which uses the cross-entropy method (CEM) \cite{Rubinstein1999} for importance sampling (IS). 

The works in \cite{lavalle2001randomized,Karaman2010a} impose differential constraints on the vertices of RRT and RRT$^\ast$, respectively, to produce feasible paths according to the robot kinodynamics. Sampling in informed spaces is generalized in \cite{Kino_informedRRTstar} to produce dynamically feasible paths, which are then used with informed-RRT$^\ast$. Recently, Wu \textit{et al.} \cite{Wu2020e} developed rapidly-exploring random reachable set trees (R3T), which constrain the expansion of RRT (and RRT$^\ast$) trees to be in the vertices' approximated reachable sets, which helps with finding dynamically feasible paths using fewer iterations than naively attempting to steering a vertex to a sampled configuration. Recent developments in controlling safety critical systems using control barrier functions (CBF) \cite{Ames2019,Ames2014,Ames2014a,Ames2017b,CBFmobile}, are exploited by Yang \textit{et al.} in CBF-RRT \cite{Yang2019a}. In this work, the authors model a safe set that contains the collision-free configurations of the robot, which is then used with a CBF-based controller to generate inputs that expand the tree in the safe set. In \cite{cbfrrt_app_Fainekos_ppr}, CBF-RRT is used to generate safe motion trajectories to safely navigate in environments with moving humans.

In this work, we develop two variants of RRT$^\ast$ in which we generate feasible trajectories using CBF-based controller syntheses, and we aim to improve the sampling of the algorithm. The contributions of the proposed work are as follows. First, CBF-RRT$^\ast$ (\S \ref{sec:CBF-RRT*}), an RRT$^\ast$ variant that is equipped with two local motion planners that generate CBF-based control inputs for exploring and expanding the RRT$^\ast$ tree (\S \ref{subbsec:Exp_LMP}), and for steering to exact desired configuration when rewiring a vertex (\S \ref{subbsec:Ext_LMP}). Using these local planners, we avoid explicit collision-checking procedure, where the trajectories are guaranteed to be in a safety set (a set of configurations that are collision-free). Second, Adaptive CBF-RRT$^\ast$ (\S \ref{sec:AdapCBF-RRT*}), a variant in which we exploit the exploration tree to focus the sampling in promising regions. To do so, we incorporate the algorithm with adaptive sampling procedure using the cross-entropy method (CEM) with nonparametric density estimation (\S \ref{subsec:CEM}). 

Figure \ref{fig:illus_CE_CBF-RRT*} illustrates the proposed adaptive iterations of CBF-RRT$^\ast$ sampling. At each newly added vertex, we attempt to steer the vertex to the goal region (left picture). Amongst the succeeded attempts, an elite subset of the produced trajectories is considered to adapt the sampling distribution (middle picture); sparsified configurations of the elite set (right picture) are then used to estimate a sampling density function (SDF) for the following sampling iteration. The proposed work is validated through simulation example in \S \ref{sec:sim_exps}, which shows that the proposed variant converges to the solution faster than the RRT$^\ast$.

\begin{figure}[htb]
	\begin{center}
		\includegraphics[width=.8\linewidth]{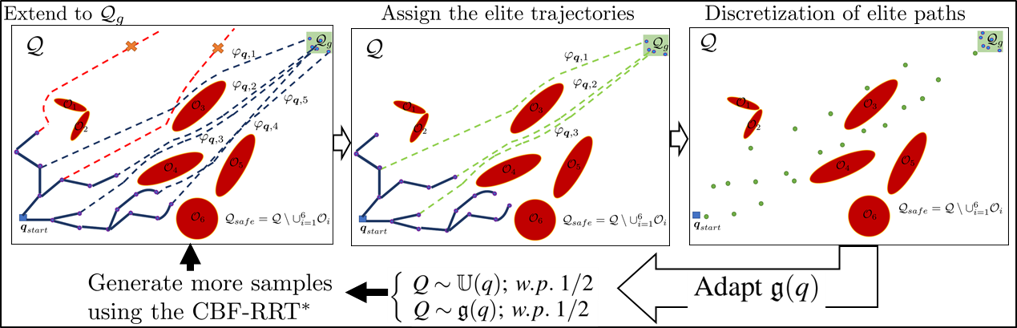}
		\medskip
	\end{center}\hfill
	\caption{Illustration of the proposed adaptive sampling using CEM with CBF-RRT$^\ast$}\label{fig:illus_CE_CBF-RRT*}
\end{figure}
\section{Problem Formulation and Approach}\label{sec:prblm_frmthion}
Consider a robot with a configuration 
$\boldsymbol{q}\in\mathcal{Q}\subset\mathbb{R}^d$, where $\mathcal{Q}$ is the configuration space and $\mathbb{R}^d$ is the $d$-dimensional Euclidean space. Let the robot dynamics be modeled as the following nonlinear affine control dynamics,
\begin{equation}\label{eq:aff_system}
	\begin{aligned}
		\dot{\boldsymbol{q}} = f(\boldsymbol{q})+g(\boldsymbol{q})\boldsymbol{u},
	\end{aligned}
\end{equation}
where $\boldsymbol{u}\in\mathcal{U}\subset\mathbb{R}^m$ is the control input, $\mathcal{U}$ is the allowable control set, and $f:\mathbb{R}^d\rightarrow\mathbb{R}^d$ and $g:\mathbb{R}^d\rightarrow \mathbb{R}^{d\times m}
$ are assumed to be locally Lipschitz functions. 

Obstacle $i$, $i=1,\ldots,n$, is denoted by $\mathcal{O}_i\subset\mathcal{Q}$ \footnote{We present the obstacles in the workspace directly as their image in $\mathcal{Q}$, i.e., robot's configurations that cause it to collide with obstacles.}. The obstacle-free configuration space, which we denote it as the safe configuration space, is given by $\mathcal{Q}_{safe}= \mathcal{Q}\setminus\bigcup\limits_{i=1}^{n}\mathcal{O}_i$.

Following \cite{Kobilarov2012e}, for a time horizon $T\in\mathbb{R}_{>0}$, let $\varphi:[0,T]\times\mathbb{R}_{>0}\to\mathcal{U}\times\mathcal{Q}$, $\varphi(t,T):=(\boldsymbol{u}(t),\boldsymbol{q}(t))$
be pairs of a control input $\boldsymbol{u}(t)\in\mathcal{U},\;\forall t\in[0,T]$ and the produced
trajectory $\boldsymbol{q}(t)\in\mathcal{Q}$ that satisfies system (\ref{eq:aff_system}). 

For a given starting configuration $\boldsymbol{q}_{start}\in\mathcal{Q}_{safe}$ and a goal region $\mathcal{Q}_{goal}\subset\mathcal{Q}_{safe}$, we define the set $\mathcal{G}$ as the set of control inputs and the produced trajectory pairs, for which the trajectory to be in $\mathcal{Q}_{safe}$, starts at $\boldsymbol{q}_{start}$ and fall in $\mathcal{Q}_{goal}$. I.e., $\mathcal{G}:=\{\varphi(t,T)\,|\;\boldsymbol{q}(0) = \boldsymbol{q}_{start},\;\boldsymbol{q}_{T}\in\mathcal{Q}_{goal},\;\boldsymbol{q}(t)\in\mathcal{Q}_{safe}, \text{ system }(\ref{eq:aff_system}), \forall t\in[0,T], T\in\mathbb{R}_{>0}\}$. The cost functional of $\varphi\in\mathcal{G}$ is defined as $J(\varphi) \,:=\, \int_{0}^{T}C(\varphi(t,T))dt$, where $C:\mathcal{U}\times\mathcal{Q}\to\mathbb{R}_{>0}$ is the running cost.


\begin{problem}[Optimal Motion Planning Problem (OMPP)]\label{pr:opt_mp_problem}
	Given a robot with system dynamics (\ref{eq:aff_system}), a starting configuration $\boldsymbol{q}_{start}\in\mathcal{Q}_{safe}$, a goal region $\mathcal{Q}_{goal}\subset\mathcal{Q}_{safe}$, and the obstacle-free configuration space $\mathcal{Q}_{safe}$, find $\varphi^\ast\in\mathcal{G}$ that minimizes $J(\varphi)$, i.e., $\varphi^\ast=\argmin\limits_{\varphi\in\mathcal{G},\;T\in\mathbb{R}_{>0}}J(\varphi)$.
\end{problem}
OMPP could be seen as a search in the set $\mathcal{G}$, which imposes a subsequent control problem of generating control inputs that guarantee that the produced system trajectory to be in $\mathcal{Q}_{safe}$. Moreover, OMPP is PSPACE-hard \cite{Reif1979ComplexityOT}. Kinodynamic RRT$^\ast$ \cite{Karaman2010a} is used to approximate a solution for the problem, where a tree is built incrementally starting at $\boldsymbol{q}_{start}$ and expanded towards $\mathcal{Q}_{goal}$ while satisfying differential constraints on the expanded vertices. The optimal solution $\varphi^\ast$ is approached asymptotically by rewiring the vertices of the tree.

\textbf{Our approach.} We develop an RRT$^\ast$ variant, in which we use local motion planners that generate control inputs which render the safe set $\mathcal{Q}_{safe}$ forward invariant in system (\ref{eq:aff_system}). That is, under such control inputs, for an initial state that lies in $\mathcal{Q}_{safe}$, the system trajectory will lie in $\mathcal{Q}_{safe}$ for all future times. Furthermore, we improve the sampling performance by biasing the SDF towards generating more samples in promising regions of $\mathcal{Q}_{safe}$. 

\section{Local Motion Planning}\label{sec:local_trajP}
In this section, we develop local motion planners that we use with RRT$^\ast$ (\S \ref{sec:CBF-RRT*}). In such planners, we use controller syntheses in which CLFs and higher order CBFs (HOCBFs) are utilized to generate control inputs to steer system (\ref{eq:aff_system}) to a desired equilibrium state or to steer towards an exploratory sample while avoiding obstacles.   

\subsection{Control Lyapunov Functions}\label{subsec:CLFs}
Consider steering the state of system (\ref{eq:aff_system}) to an equilibrium state $\boldsymbol{q}_{eq}$ (i.e. $f(\boldsymbol{q}_{eq}) = 0$).

\begin{definition}[Control Lyapunov Function (CLF) \cite{Wieland2007}]\label{def:CLF}
	Let $V(\boldsymbol{q}):\mathcal{Q}\rightarrow\mathbb{R}$ be a continuously differentiable function. $V(\boldsymbol{q})$ is said to be CLF if there exist $c_1,c_2,c_3>0$, such that
	\begin{equation}\label{eq:CLF}
		\begin{aligned}
			c_1||\boldsymbol{q}-\boldsymbol{q}_{eq}||^2\leq V(\boldsymbol{q})\leq c_2||\boldsymbol{q}-\boldsymbol{q}_{eq}||^2, \\
			\dot{V}(\boldsymbol{q}) = \pounds_f V(\boldsymbol{q}) +\pounds_g V(\boldsymbol{q})\boldsymbol{u}, \\
			\inf_{\boldsymbol{u}\in\mathcal{U}} [\dot{V}(\boldsymbol{q})+c_3 V(\boldsymbol{q})]\leq0,&&\forall \boldsymbol{q}\in\mathcal{Q}.
		\end{aligned}
	\end{equation} where $\pounds_f V(\boldsymbol{q}) = \frac{\partial V(\boldsymbol{q})}{\partial \boldsymbol{q}}f(\boldsymbol{q})$, and $\pounds_g V(\boldsymbol{q}) = \frac{\partial V(\boldsymbol{q})}{\partial \boldsymbol{q}}g(\boldsymbol{q})$ denote the Lie derivatives of $V$ along $f$ and of $V$ along $g$, respectively.
\end{definition}
\begin{theorem}[\cite{Ames2019}]\label{thm:u_clf}
    Let $V(\boldsymbol{q})$ be a CLF, any Lipschitz continuous control input $\boldsymbol{u}\in\{\boldsymbol{u}\in\mathcal{U}\;|\; \pounds_f V(\boldsymbol{q}) +\pounds_g V(\boldsymbol{q})\boldsymbol{u}+c_3 V(\boldsymbol{q})\}$ asymptotically stabilizes (\ref{eq:aff_system}) to $\boldsymbol{q}_{eq}$. 
\end{theorem}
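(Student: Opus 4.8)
The plan is to treat the CLF $V$ as a Lyapunov function for the closed-loop system and to conclude via a standard comparison argument. First I would fix a Lipschitz continuous feedback $\boldsymbol{u}(\boldsymbol{q})$ from the selection set and establish that the resulting closed loop is well-posed: since $f$ and $g$ are locally Lipschitz by assumption on (\ref{eq:aff_system}) and $\boldsymbol{u}(\boldsymbol{q})$ is Lipschitz continuous, the closed-loop vector field $\boldsymbol{q}\mapsto f(\boldsymbol{q})+g(\boldsymbol{q})\boldsymbol{u}(\boldsymbol{q})$ is locally Lipschitz, so for each initial condition a unique solution $\boldsymbol{q}(t)$ exists locally. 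This is exactly where the Lipschitz hypothesis on $\boldsymbol{u}$ is needed; without it the chain-rule computation of $\dot V$ along trajectories would not be justified.

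Next I would use the quadratic sandwich bounds in (\ref{eq:CLF}) to observe that $V$ is positive definite with respect to $\boldsymbol{q}_{eq}$ (in particular $V(\boldsymbol{q}_{eq})=0$ and $V(\boldsymbol{q})>0$ otherwise) and radially unbounded, hence a legitimate Lyapunov candidate. Differentiating $V$ along a closed-loop trajectory and substituting the chosen control gives $\dot V(\boldsymbol{q}(t)) = \pounds_f V(\boldsymbol{q}(t)) + \pounds_g V(\boldsymbol{q}(t))\,\boldsymbol{u}(\boldsymbol{q}(t))$, which, by membership in the selection set, satisfies the differential inequality $\dot V \le -c_3 V$ with $c_3>0$.

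The core step is to integrate this inequality by the comparison lemma (equivalently, Gr\"onwall's inequality), yielding $V(\boldsymbol{q}(t)) \le V(\boldsymbol{q}(0))\,e^{-c_3 t}$ for all $t\ge 0$. Because $V$ is then non-increasing and bounded, the trajectory remains in a compact sublevel set (by the lower quadratic bound and radial unboundedness), which rules out finite escape time and extends the solution to all forward time. Combining the estimate with the bounds from (\ref{eq:CLF}) gives
\[
c_1||\boldsymbol{q}(t)-\boldsymbol{q}_{eq}||^2 \le V(\boldsymbol{q}(t)) \le c_2\,e^{-c_3 t}\,||\boldsymbol{q}(0)-\boldsymbol{q}_{eq}||^2,
\]
so that $||\boldsymbol{q}(t)-\boldsymbol{q}_{eq}|| \le \sqrt{c_2/c_1}\;e^{-c_3 t/2}\,||\boldsymbol{q}(0)-\boldsymbol{q}_{eq}||$. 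This shows $\boldsymbol{q}(t)\to\boldsymbol{q}_{eq}$ exponentially, which in particular establishes asymptotic stability and completes the argument.

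I expect the main obstacle to be the regularity of the feedback rather than the Lyapunov inequality itself: one must argue that a \emph{Lipschitz} selection from the set $\{\boldsymbol{u}\in\mathcal{U}: \pounds_f V(\boldsymbol{q}) + \pounds_g V(\boldsymbol{q})\boldsymbol{u} + c_3 V(\boldsymbol{q}) \le 0\}$ actually exists. The CLF feasibility condition in (\ref{eq:CLF}) guarantees this set is nonempty for every $\boldsymbol{q}$, and a construction such as Sontag's universal formula provides a feedback that is Lipschitz continuous away from $\boldsymbol{q}_{eq}$ (and continuous at it); once such a selection is fixed, the remaining steps are the routine comparison-lemma computation above.
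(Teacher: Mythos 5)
Your proof is correct: the paper itself states this theorem without proof (it is imported by citation from \cite{Ames2019}), and your argument --- fixing a Lipschitz feedback so the closed loop is well-posed, obtaining $\dot V \le -c_3 V$ from membership in the constraint set, and integrating via the comparison lemma to get $\|\boldsymbol{q}(t)-\boldsymbol{q}_{eq}\| \le \sqrt{c_2/c_1}\,e^{-c_3 t/2}\|\boldsymbol{q}(0)-\boldsymbol{q}_{eq}\|$ --- is exactly the standard argument underlying the cited result, and in fact yields the stronger conclusion of exponential stability. You also correctly repaired the paper's typo by reading the admissible set as $\{\boldsymbol{u}\in\mathcal{U} \mid \pounds_f V(\boldsymbol{q}) + \pounds_g V(\boldsymbol{q})\boldsymbol{u} + c_3 V(\boldsymbol{q}) \le 0\}$, without which the statement is vacuous.
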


\subsection{Higher Order Control Barrier Functions}\label{subsec:HOCBFs}
Consider system (\ref{eq:aff_system}) and a differentiable function $h(\boldsymbol{q}):\,\mathcal{Q}\to\mathbb{R}$ with relative degree $\rho>0$, where $\rho$ reads as the number of times that we need to differentiate $h(\boldsymbol{q})$ until the control input $\boldsymbol{u}$ appears. Let a series of functions $\psi_j(.):\,\mathcal{Q}\to\mathbb{R}$, $j=0,\,1,\,...,\,\rho$, be defined as follows. $\psi_0\,:=\,h(\boldsymbol{q})$, and for $j\geq1$, $\psi_j\,:=\,\dot{\psi}_{j-1}\,+\,\alpha_j(\psi_{j-1})$, where $\alpha_j:\,\mathcal{Q}\to\mathbb{R}$ is a class $\mathcal{K}$ function \cite{Xiao2019c}. 

Having defined $\psi_j$, let the sets $\mathcal{C}_j$, $j=1,\dots,\rho$, be defined by $\mathcal{C}_j \,:=\,\{\boldsymbol{q}\in\mathcal{Q}\,|\,\psi_{j-1}(\boldsymbol{q})\geq0\}$.
\begin{definition}[HOCBF \cite{Xiao2019c}]\label{def:HOCBF}
	Given $\psi_0,\,...\psi_{\rho}$ with the corresponding series of sets $\mathcal{C}_1,\,...,\mathcal{C}_{\rho}$, the differentiable function $h(\boldsymbol{q})$ is said to be HOCBF of relative degree $\rho$ for system (\ref{eq:aff_system}), if there are $\alpha_1,\,...,\,\alpha_{\rho}$ class $\mathcal{K}$ functions such that
	\begin{equation}\label{eq:HOCBF_constrs}
		\begin{split}
			\pounds_f^{\rho}h(\boldsymbol{q})+\pounds_g\pounds_f^{\rho-1}h(\boldsymbol{q})\boldsymbol{u}+\frac{\partial^{\rho}h(\boldsymbol{q})}{\partial t^{\rho}} + O(h(\boldsymbol{q}))+\\\alpha_{\rho}\left(\psi_{\rho-1}(\boldsymbol{q})\right)\geq0\\
			\forall\, \boldsymbol{q}\in \mathcal{C}_1\cap\,\mathcal{C}_2\,\cap\,...\,\cap\mathcal{C}_{\rho}\quad\quad\quad\quad\quad\quad\quad\quad\quad
		\end{split}
	\end{equation}
	where $O(h(\boldsymbol{q}))$ is the partial derivatives with respect to $t$ with relative degree $\leq\rho-1$ and the remaining Lie derivatives along $f$ \cite{Xiao2019c}.
\end{definition}
\begin{theorem}[\cite{Xiao2019c}]\label{thm:HOCBF_syn}
	Let $h(\boldsymbol{q})$ be a HOCBF, then any Lipschitz continuous control input $\boldsymbol{u}$, such that, $\boldsymbol{u}\,\in\,\{\boldsymbol{u}\in\mathcal{U}\,|\,\pounds_f^{\rho}h(\boldsymbol{q})+\pounds_g\pounds_f^{\rho-1}h(\boldsymbol{q})\boldsymbol{u}+\frac{\partial^{\rho}h(\boldsymbol{q})}{\partial t^{\rho}} + O(h(\boldsymbol{q}))+\alpha_{\rho}\left(\psi_{\rho-1}(\boldsymbol{q})\right)\geq0\}$, renders the set $\mathcal{C}_1\cap\,\mathcal{C}_2\,\cap\,...\,\cap\mathcal{C}_{\rho}$ forward invariant in (\ref{eq:aff_system}).
\end{theorem}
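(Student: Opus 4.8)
The plan is to prove forward invariance by a backward induction on the index $j$ that establishes, one level at a time, that each set $\mathcal{C}_j$ is forward invariant, the workhorse at every step being the scalar comparison lemma. The starting observation is purely algebraic: by the defining recursion $\psi_j := \dot{\psi}_{j-1} + \alpha_j(\psi_{j-1})$, the left-hand side of the HOCBF inequality (\ref{eq:HOCBF_constrs}) is exactly $\psi_\rho(\boldsymbol{q})$, so an admissible $\boldsymbol{u}$ enforces $\psi_\rho(\boldsymbol{q}) \geq 0$ for every $\boldsymbol{q} \in \mathcal{C}_1 \cap \cdots \cap \mathcal{C}_\rho$. Hence, as long as the trajectory remains in the intersection, $\dot{\psi}_{\rho-1} = \psi_\rho - \alpha_\rho(\psi_{\rho-1}) \geq -\alpha_\rho(\psi_{\rho-1})$.

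First I would record the single-variable lemma that drives the induction: if an absolutely continuous scalar signal $y(t)$ satisfies $\dot{y} \geq -\alpha(y)$ for a class $\mathcal{K}$ function $\alpha$ and $y(0) \geq 0$, then $y(t) \geq 0$ for all $t$. This follows from the comparison lemma applied against the auxiliary ODE $\dot{z} = -\alpha(z)$, $z(0) = y(0)$; since $\alpha(0) = 0$ makes $z \equiv 0$ an equilibrium, uniqueness keeps $z(t) \geq 0$, and the comparison principle gives $y(t) \geq z(t) \geq 0$.

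Next I would run the induction downward from $j = \rho$ to $j = 1$. The base step combines $\dot{\psi}_{\rho-1} \geq -\alpha_\rho(\psi_{\rho-1})$ with $\psi_{\rho-1}(\boldsymbol{q}(0)) \geq 0$ (which holds because the initial configuration lies in $\mathcal{C}_\rho$) and the lemma to conclude $\psi_{\rho-1}(\boldsymbol{q}(t)) \geq 0$, i.e. the trajectory stays in $\mathcal{C}_\rho$. For the inductive step, once $\psi_j(\boldsymbol{q}(t)) \geq 0$ is known for all $t$, the recursion gives $\dot{\psi}_{j-1} = \psi_j - \alpha_j(\psi_{j-1}) \geq -\alpha_j(\psi_{j-1})$; combined with $\psi_{j-1}(\boldsymbol{q}(0)) \geq 0$ (the initial state is in $\mathcal{C}_j$) the lemma yields $\psi_{j-1}(\boldsymbol{q}(t)) \geq 0$, so $\mathcal{C}_j$ is forward invariant. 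Descending to $j = 1$ gives $\psi_0 = h \geq 0$ along the entire trajectory, and since every $\mathcal{C}_j$ has been shown invariant, so is their intersection.

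The main obstacle I anticipate is the mild circularity in the base step: inequality (\ref{eq:HOCBF_constrs}) is only assumed on $\mathcal{C}_1 \cap \cdots \cap \mathcal{C}_\rho$, yet I invoke it to prove that the trajectory never leaves that set. I would resolve this with the bootstrapping argument standard for ordinary CBFs: work on the maximal interval on which the trajectory remains in the intersection, use the comparison lemma to show that none of the boundaries $\psi_{j-1} = 0$ can be crossed, and conclude by continuity that this interval is all of $[0,T]$. The Lipschitz continuity of $\boldsymbol{u}$, $f$, and $g$ guarantees existence and uniqueness of the trajectory, which is what makes the comparison argument rigorous.
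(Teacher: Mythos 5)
Theorem~\ref{thm:HOCBF_syn} is imported from \cite{Xiao2019c} with no proof given in this paper, so the only meaningful comparison is with the proof in that reference --- and your argument is essentially that proof: recognize that the admissible-input constraint says exactly $\psi_\rho(\boldsymbol{q})\geq 0$, hence $\dot{\psi}_{\rho-1}\geq-\alpha_\rho(\psi_{\rho-1})$, and then descend from $j=\rho$ to $j=1$ applying the scalar comparison lemma at each level to conclude every $\mathcal{C}_j$ (and thus their intersection) is forward invariant. Your proof is correct, and your explicit treatment of the one delicate point --- that the inequality is only assumed on $\mathcal{C}_1\cap\cdots\cap\mathcal{C}_\rho$, handled by the maximal-interval bootstrapping --- is a detail the cited proof itself glosses over.
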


\subsection{Formulation of Local Motion Planning}\label{subsec:frm_local_MP}

Our approach to approximate a solution for OMPP \ref{pr:opt_mp_problem} entails computing feasible motion trajectories for system (\ref{eq:aff_system}) incrementally. Producing collision-free (safe) trajectories impose a control problem of generating control inputs which guarantee that the produced trajectories are in $\mathcal{Q}_{safe}$. 

Aames \textit{et al.} \cite{Ames2014a} propose CLF-CBF-QP controller synthesis, where the control inputs are generated in a discrete-time manner. At each time step $t$, a control input is computed by solving a quadratic program (QP) subject to CLF and CBF constraints, and then applied for $\Delta t$ time to evolve system (\ref{eq:aff_system}). The CLF constraint certifies \textit{liveness} of the trajectory, that is the trajectory is progressing towards a desired equilibrium state, $\boldsymbol{q}_f$. The CBF constraints certify the \textit{safety} of the trajectory, which reads that $\mathcal{Q}_{safe}$ is forward invariant in system (\ref{eq:aff_system}). Let $V(\boldsymbol{q}(t))$ be a CLF as defined in Definition \ref{def:CLF} and $h(\boldsymbol{q}(t))$ be a HOCBF as defined in Definition \ref{def:HOCBF}, we reformulate the CLF-CBF-QP controller synthesis by constraining it with CLF and HOCBF constraints; the corresponding QP is given by:   
\begin{equation}
\begin{aligned}\label{synth:CLF-CBF-QP}
		\boldsymbol{u}_{LP}(t) = \argmin_{\boldsymbol{u}(t)\in\mathcal{U}} ||\boldsymbol{u}(t)-\boldsymbol{u}_{ref}(t)||^2+\delta^2\quad\quad\quad\quad\quad\quad\quad\\
		\textrm{s.t.}\;\begin{array}{ll}\pounds_f V(\boldsymbol{q}(t)) +\pounds_g V(\boldsymbol{q}(t))\boldsymbol{u}(t)+c_3 V(\boldsymbol{q}(t))\leq\delta\end{array} \\
		\begin{array}{ll}
			\pounds_f^{\rho}h(\boldsymbol{q}(t))+\pounds_g\pounds_f^{\rho-1}h(\boldsymbol{q}(t))\boldsymbol{u}(t)+\\\frac{\partial^{\rho}h(\boldsymbol{q}(t))}{\partial t^{\rho}} + O(h(\boldsymbol{q}(t)))+\alpha_{\rho}\left(\psi_{\rho-1}(\boldsymbol{q}(t))\right)\geq0
		\end{array}
\end{aligned}
\end{equation} where $\delta$ is a slack variable to ensure the feasibility of the HOCBF constraint if there is a conflict with the CLF constraint; $\boldsymbol{u}_{ref}(t)$ is a reference control input which could be assigned if it is desirable to track reference control inputs while certifying safety and liveness; and
$\boldsymbol{u}_{LP}(t)$ denotes the local planner control input at time $t$. 

\begin{remark}[\cite{Xiao2019c}]\label{rmrk:HOCBFs_gnrl_ofCBFs}
CBF \cite{Ames2014a} is HOCBF with $\rho=1$. In this paper, we use HOCBFs instead of CBFs to make the proposed algorithm amenable for planning for systems with relative degree $\rho\geq1$. 
\end{remark}

At each iteration of kinodynamic RRT$^\ast$, a uniform sample $\boldsymbol{q}_s\in\mathcal{Q}_{safe}$ is generated; the configuration of its nearest vertex is used as an initial condition in steering the system to a configuration in the direction of $\boldsymbol{q}_s$. If such configuration is not reachable or the trajectory to reach it is in collision with an obstacle, the sampling iteration is rejected. In \cite{Wu2020e,tedrak_reachRRT,MERL_invSafeRRT}, the reachable set of each vertex is computed explicitly and is used to guide the expansion of the tree. In this work, however, all exploration samples are accepted, and we use a variant of the CLF-CBF-QP controller synthesis (\ref{synth:CLF-CBF-QP}) that certifies producing a safe trajectory to an exploratory configuration, which might be deviated from the desired exploration configuration, see \S \ref{subbsec:Exp_LMP}. Such deviation, under some assumptions, would not violate the completeness of the algorithm (see \S \ref{sec:CBF-RRT*}) and would help with exploring the configuration space. The asymptotic optimality of RRT$^\ast$ is ensured by rewiring the vertices of the exploratory tree. For this phase, we propose to use the CLF-CBF-QP controller synthesis (\ref{synth:CLF-CBF-QP}) to generate control inputs that certify steering to exact desired configurations while certifying the safety of the system trajectory, see \S \ref{subbsec:Ext_LMP}.

\subsubsection{Exact Local Motion Planning}\label{subbsec:Ext_LMP}
In the CLF-CBF-QP (\ref{synth:CLF-CBF-QP}), we set $\boldsymbol{u}_{ref}=0$, and given $\boldsymbol{q}_{init}$ and $\boldsymbol{q}_f$, let $V(\boldsymbol{q})$ be defined as a CLF with $\boldsymbol{q}_{eq}$ being set to $\boldsymbol{q}_f$. For $\mathcal{Q}_{safe}$, assume that we are given a HOCBF $h(\boldsymbol{q})$. Using the aforementioned setting of (\ref{synth:CLF-CBF-QP}), we generate discrete control inputs to steer from $\boldsymbol{q}_{init}$ to $\boldsymbol{q}_f$. In the planner implementation, the QP (\ref{synth:CLF-CBF-QP}) is assigned to be solved with at most $\texttt{T}$ times to generate control inputs to steer (\ref{eq:aff_system}) to $\boldsymbol{q}_f$. Given that the CLF constraint is relaxed with the slack variable $\delta$, the planner might fail to steer to $\boldsymbol{q}_f$ and will stuck in local solution and in this case. In such scenario, the local motion plan will be disregarded. 

The time horizon of the produced trajectory is determined by the number of instances the QP is solved times $\Delta t$. 

\subsubsection{Exploratory Local Motion Planning}\label{subbsec:Exp_LMP}

In this setting we want to steer form $\boldsymbol{q}_{init}\in\mathcal{Q}_{safe}$ to an exploratory configuration $\boldsymbol{q}_f\in\mathcal{Q}$. We use a relaxed variant of the CLF-CBF-QP (\ref{synth:CLF-CBF-QP}), denoted as CBF-QP, with just HOCBF constraints. As it will become clear in shortly, the computed control inputs will generate safe trajectory to approach $\boldsymbol{q}_f$ but not necessarily steer to it exactly, which achieves an exploration task of tree-based motion planning, see \S \ref{sec:CBF-RRT*}. 

\begin{assumption}\label{asm:OL_ctrl_input}
In the absence of obstacles (i.e., $\mathcal{Q}_{safe}=\mathcal{Q}$), assume that for any $\boldsymbol{q}_f\in\mathcal{Q}$ that is reachable from any configuration $\boldsymbol{q}_{init}\in \mathcal{Q}$, the user is able to compute open-loop control inputs $\boldsymbol{u}_{OL}(t),\;t\in[0,T_{OL}]$ and time horizon $T_{OL}$ to steer system (\ref{eq:aff_system}) from $\boldsymbol{q}(0) = \boldsymbol{q}_{init}$ to $\boldsymbol{q}(T_{OL}) = \boldsymbol{q}_f$.
\end{assumption}

In the following, we detail the CBF-CLF-QP setting to implement the CBF-QP that is used with this local motion planner. Based on Assumption \ref{asm:OL_ctrl_input}, let the control inputs $\boldsymbol{u}_{OL}(t),\;t\in[0,T_{OL}]$ be computed offline in ideal setting (no obstacles in the environment) to steer to $\boldsymbol{q}_{f}$. The quadratic cost is set as $||\boldsymbol{u}(t) - \boldsymbol{u}_{OL}(t)||^2$. For $t\in[0,T_{OL}]$, $\boldsymbol{u}_{LP}(t)$ is computed by solving the aforementioned settings of QP (\ref{synth:CLF-CBF-QP}) and is applied for $\Delta t$ time duration to evolve system (\ref{eq:aff_system}).           

The utilities of using such exploratory and exact control inputs are: first, mitigate the conventional collision-checking procedure, which is computationally expensive, and second, any sample in $\mathcal{Q}$ is accepted for exploration, where the synthesis certify that the produced trajectory is in $\mathcal{Q}_{safe}$, thus, the number of samples that are used to yield an acceptable solution to Problem \ref{pr:opt_mp_problem} is reduced (see Figure \ref{fig:20RunsSimulation} the simulation experiments in \S \ref{sec:sim_exps}).

\begin{example}\label{ex:localMP_unicycle}
Consider a unicycle robot with configuration $\boldsymbol{q}=[x,\,y,\,\theta]^\top\in\mathbb{R}^2\times[-\pi,\pi]$, where $(x,y)\in\mathbb{R}^2$ and $\theta\in[-\pi,\pi]$ are the robot Cartesian position and the heading of the robot, respectively, with respect to the fixed frame $O-x_0y_0$ which is fixed at the origin. The elements of $\boldsymbol{q}$ evolve with respect to the following dynamics: $      \dot{x}=v\cos(\theta),\;\dot{y}=v\sin(\theta),\;\dot{\theta}=\omega$, where $\omega\in[\underline{\omega},\overline{\omega}],\;\underline{\omega},\overline{\omega}\in\mathbb{R}$, and $v\in[\underline{v},\overline{v}],\;\underline{v},\overline{v}\in\mathbb{R}$ are the angular velocity and the translational velocity inputs with their corresponding upper and lower bounds, respectively. We assume that the robot workspace contains obstacles that could be modeled as circles or ellipsoids.   

Given the unicycle configuration $\boldsymbol{q}$, a HOCBF of an ellipsoidal obstacle $i$ is defined as follows.
\begin{equation}\label{eq:elip_h(q)}
	\begin{aligned}
		h_i(\boldsymbol{q}(t))\, = [x(t)-x_{i},\;y(t)-y_{i}]\, E
		\begin{bmatrix}
			x(t)-x_{i}\\y(t) - y_{i},
		\end{bmatrix} -1
	\end{aligned}
\end{equation}
where $(x_i,y_i)\in\mathbb{R}^2$ is the center of the obstacle with respect to $O-x_0y_0$; and the matrix $E$ is given by
\begin{equation*}
	\begin{aligned}
		E\,=\,\left[\begin{smallmatrix}
			(\frac{\cos(\phi)}{\tilde{a}})^2\,+\,(\frac{\sin(\phi)}{\tilde{b}})^2&&-\sin(\phi)\cos(\phi)\left((\frac{1}{\tilde{b}})^2-(\frac{1}{\tilde{a}})^2\right)\\-\sin(\phi)\cos(\phi)\left((\frac{1}{\tilde{b}})^2-(\frac{1}{\tilde{a}})^2\right)&&(\frac{\sin(\phi)}{\tilde{a}})^2\,+\,(\frac{\cos(\phi)}{\tilde{b}})^2
		\end{smallmatrix}\right]
	\end{aligned}
\end{equation*}
with $\tilde{a}=a+r_r$ and $\tilde{b=b+r_r}$ being safety distances of the center of the robot along the major and minor axes, respectively; $a,b,r_r\in\mathbb{R}$ are length of the major and minor axes of the ellipsoid, and the radius of the robot, respectively, and $\phi\in[-\pi,\pi]$ is the orientation of the obstacle with respect to $O-x_0y_0$. If $a = b$, then Eq. (\ref{eq:elip_h(q)}) degenerates to a circle.

Given an initial configuration $(x_0,y_0,\theta_0)$, we want to generate motion plans for the following two cases: \textbf{(i)} steering the robot to $(x_d,y_d,\theta_d)$ using the exact local motion planner (\S \ref{subbsec:Ext_LMP}), and \textbf{(ii)} steering towards $(x_d,y_d,\theta_d)$ using the exploratory motion planner (\S \ref{subbsec:Exp_LMP}).  

\textit{Exact local motion planner formulation}. Following the approach in \cite{robotaiumGTech}, we consider controlling a look-ahead point that is $d$ distance from the center of the wheels axis and along the sagittal axis of unicycle robot. The dynamics of a look-ahead point, $(x_l,y_l)\in\mathbb{R}^2$ is given by the integrator dynamics,
\begin{equation}\label{eq:single_int}
	\begin{aligned}
		\begin{bmatrix}
		    \dot{x}_l\\\dot{y}_l
		\end{bmatrix} =
		\begin{bmatrix}
		    u_1\\u_2
		\end{bmatrix}
		=\begin{bmatrix}
			\cos\theta&-d\sin\theta\\\sin\theta&d\cos\theta
		\end{bmatrix}\begin{bmatrix}
		v\\\omega
	\end{bmatrix}.
	\end{aligned}
\end{equation} where $u_1,u_2\in\mathbb{R}$.
Let $V := ||[x_l-(x_d+d\cos{\theta}),\;y_l-(y_d+d\sin{\theta})]^\top||^2$ be a CLF with the equilibrium state set to $(x_d+d\cos{\theta}),y_d+d\sin{\theta}))$. For each obstacle we define a HOCBF (\ref{eq:elip_h(q)}) while substituting the look-ahead state variables $x_l(t)$ and $y_l(t)$ instead of $x(t)$ and $y(t)$, respectively. The CLF and HOCBF are both with relative degree $\rho=1$ with respect to the control $\boldsymbol{u} = [u_1,u_2]^\top$. We compute the HOCBF constraint using inequality (\ref{eq:HOCBF_constrs}) where $\psi_0(\boldsymbol{q})=h(\boldsymbol{q})$ and we assign $\alpha_1(\psi_0(\boldsymbol{q}))=h(\boldsymbol{q})$; in the CLF constraint in (\ref{synth:CLF-CBF-QP}) $c_3$ is set to $1$. The computed control inputs using the CLF-CBF-QP controller of the exact local motion planner could be mapped to the linear and angular velocities $v,u$ via the static map
\begin{equation}\label{eq:inv_mapping_v_omega}
	\begin{aligned}
		\begin{bmatrix}
			v\\\omega
		\end{bmatrix}=
		\begin{bmatrix}
			\cos\theta&-d\sin\theta\\\sin\theta&d\cos\theta
		\end{bmatrix}^{-1}
		\begin{bmatrix}
		    u_1\\u_2
		\end{bmatrix},
	\end{aligned}
\end{equation} where the matrix in (\ref{eq:inv_mapping_v_omega}) is always invertible unless $d=0$. In Figure \ref{fig:sim_all_paths}.\textbf{b} we show the generated trajectory of the look-ahead state using the exact local motion planner control inputs.

\textit{Exploratory local motion planner formulation}. Similar to the exact steering formulation, we consider controlling the look-ahead point. In the exploratory CBF-QP (see \S \ref{subbsec:Exp_LMP}) we compute $\boldsymbol{u}_{OL}$ as follows. $(x_{l,0},y_{l,0})=(x_0+d\cos{\theta_0},y_0+d\sin{\theta_0})$ and $(x_{l,d},y_{l,d})=(x_d+d\cos{\theta_d},y_d+d\sin{\theta_d})$ are the initial and desired configurations of the look ahead point, respectively, given the integrator dynamics (\ref{eq:single_int}) we define $\boldsymbol{u}_{OL}$ as piecewise linear controls that represents the line between $(x_{l,0},y_{l,0})$ and $(x_{l,d},y_{l,d})$. In Figure \ref{fig:sim_all_paths}.\textbf{c} we show the produced trajectory of the look-ahead state using the exploratory local motion planner control input, where the trajectory is deviated from following $\boldsymbol{u}_{OL}$ due to the presence of obstacles.      
\end{example}

\section{CBF-RRT$^\ast$}\label{sec:CBF-RRT*}
In this section we detail the algorithmic formulation of the proposed algorithm, CBF-RRT$^\ast$, which approximates a solution of the OMPP \ref{pr:opt_mp_problem}. The exploratory and exact local motion planners (see \S \ref{subbsec:Exp_LMP}, and \S \ref{subbsec:Ext_LMP}) are used to expand the RRT tree and to rewire the tree, respectively. We show that, under some assumptions, the probabilistic completeness of RRT$^\ast$ is preserved using such local motion planning. 

\subsection{The Algorithm}\label{subsec:CBF-RRT*_alg} 
Considering tree $\mathcal{T}=(\mathcal{V},\mathcal{E})$ on $\mathcal{Q}_{safe}$, with vertices set $\mathcal{V}\subset\mathcal{Q}_{safe}$ and edges $\mathcal{E}=\mathcal{V}\times\mathcal{V}$, we define the following primitive functions that are used in the proposed work: \textbf{(i)} $\texttt{Sample}(\mathcal{G},\texttt{adapFlag}):\mathcal{G}\times\{\texttt{True},\,\texttt{False}\}\to\mathcal{Q},$, given a number of $\varphi\in\mathcal{G}$ and a Boolean variable $\texttt{adapFlag}$, the function returns a sample in $\mathcal{Q}$. If $\texttt{adapFlag}=\texttt{False}$, the function returns a uniform sample from $\mathcal{Q}$, otherwise the SDF will be adapted (see \S \ref{sec:AdapCBF-RRT*}) and will be used to generate a sample in $\mathcal{Q}$. \textbf{(ii)} $\texttt{Comp\_uOL}(\boldsymbol{q}_{s},v):\mathcal{Q}_{safe}\times\mathcal{V}\to\mathcal{U}$, given sample $\boldsymbol{q}_s$, vertex $v$ and considering Assumption \ref{asm:OL_ctrl_input}, the function computes the open-loop  control inputs $\boldsymbol{u}_{OL}(t)\in\mathcal{U},\;t\in[0,T_{OL}]$ and a time horizon $T_{OL}$ to steer from vertex $v$ towards $\boldsymbol{q}_s$. \textbf{(iii)} $\texttt{ExpLPlanning}(v,\boldsymbol{u}_{OL},T_{OL}):\mathcal{V}\times\mathcal{U}\times\mathbb{R}_{>0}\to\mathcal{V}$, given vertex $v$ and control inputs $\boldsymbol{u}_{OL}$, the function steers system (\ref{eq:aff_system}) form vertex $v$ using the exploratory CBF-QP controller synthesis (see \S \ref{subbsec:Exp_LMP}) with $\boldsymbol{u}_{ref}=\boldsymbol{u}_{OL}$, and then establishes a vertex, $v_{new}$, at the last configuration of the produced trajectory, which, as detailed in \S \ref{sec:local_trajP}, is certified to be in $\mathcal{Q}_{safe}$. \textbf{(iv)} $\texttt{ExtLPlanning}(v_1,v_2):\mathcal{V}\times\mathcal{V}\to\mathcal{V}$, given the vertices $v_1$ and $v_2$, the function steers from $v_1$ to $v_2$ using the exact CLF-CBF-QP synthesis (see \S \ref{subbsec:Ext_LMP}). 

CBF-RRT$^\ast$ is initialized with a root vertex, $v_{start}$, at $\boldsymbol{q}_{start}$ (Line \ref{line:alg1_init} in Algorithm \ref{alg:Ada_CBF-RRT*}). Exploration is done by sampling $\boldsymbol{q}_s\in\mathcal
{Q}$, which is used guide the expansion of its nearest vertex, $v_{nearest}$ (Line \ref{line:1st_v_nearest} - Line \ref{line: alg1_ SafeSteer}). First, using $\texttt{Comp\_uOL}$ we compute $\boldsymbol{u}_{OL}(t),t\in[0,T_{OL}]$ that, if $||\boldsymbol{q}_s-v_{nearest}<\eta||$, steer system (\ref{eq:aff_system}) from $v_{nearest}$ to $\boldsymbol{q}_{s}$, otherwise, steer system (\ref{eq:aff_system}) from $v_{nearest}$ to $\boldsymbol{q}_{new}$ such that $||\boldsymbol{q}_{new}-v_{nearest}=\eta||$ and in the direction of $\boldsymbol{q}_s$, where $\eta\in\mathbb{R}_{R>0}$(Line \ref{line:alg1_compuOL}). As it will become clear in the completeness details (\S \ref{subsec:CBFRRT*_completenss}), we assign $\eta=\frac{\varepsilon}{4}+\mu+2\iota$, where $\varepsilon$ is a parameter imposed by the robot environment, $\mu$ is a parameter measured by tuning the HOCBF, and $0<\iota<\frac{\varepsilon}{4}-\mu$. Second, the computed $\boldsymbol{u}_{OL}$ is used with the exploratory local motion planner $\texttt{ExpLPlanning}$ (see \S \ref{subbsec:Exp_LMP}) to extend to $v_{new}$, where $v_{new}$ and the produced trajectory to reach it are certified by construction to be in $\mathcal{Q}_{safe}$ (Line \ref{line: alg1_ SafeSteer}). 

The ideal case of the exploration phase is to steer to a new configuration ($\boldsymbol{q}_{new}$) such that $||v_{nearest}-\boldsymbol{q}_{new}||=\eta$ and in the direction of sample $\boldsymbol{q}_s$, however, if such configuration lies within or close to an obstacle, the produced trajectory will deviate from reaching the desired configuration. Such deviation, however, is acceptable under some assumptions to preserve the completeness of the algorithm, see Theorem \ref{Thm:cmpltness_CBF-RRT*}. Moreover, since the trajectories are guaranteed to be safe, no explicit collision check is needed, which reduces the computational burden of the algorithm.  

The rewiring procedure (Line \ref{line:alg1_ begining_rewiring} - Line \ref{line:alg1_ end_rewiring}) is similar to the conventional rewiring of RRT$^\ast$ (see ~\cite{KaramanRRTstarIJRR} for details). Rewiring vertex $v_1\in\mathcal{V}$ to a vertex that is reachable from, $v_2\in\mathcal{V}$, is accomplished through the exact CLF-CBF-QP control synthesis (\S \ref{subbsec:Ext_LMP}), see Line \ref{line:alg1_ext_local_trajP1} and Line \ref{line:alg1_ext_local_trajP2}. 

\begin{remark}\label{rmrk:Raduis_of_AsymOpt_of_RRTstar}
Theorem $1$ in \cite{rev_asymptoticRRTstar} concludes that the asymptotic optimality of a solution of OMPP, that is generated using Kinodynamic RRT$^\ast$, is guaranteed by the following condition: for vertex $v\in\mathcal{V}$, the vertices that lie within a $d$-dimensional hypersphere of radius $\lambda(\log(|\mathcal{V}|)/|\mathcal{V}|)^{1/(d+1)}$ are considered for searching for better parent vertex for $v$, where $\lambda\in\mathbb{R}_{>0}$ and $|\mathcal{V}|$ is the number of vertices of tree $\mathcal{T}$. We assign such radius in the rewiring procedure for CBF-RRT$^\ast$ (Line \ref{line:alg1_ball_radi} in Algorithm \ref{alg:Ada_CBF-RRT*}). Having used local motion planners based on the CBF-QP and CLF-CBF-QP controller syntheses (see \S \ref{subbsec:Exp_LMP} and \S \ref{subbsec:Ext_LMP}), however, requires further investigation to ensure that the asymptotic optimality will indeed be preserved, which we leave as future work.  
\end{remark}

\setlength{\textfloatsep}{15pt}
\begin{algorithm}[t]\scriptsize
	\bf{Input:} $\boldsymbol{q}_{start}$; $\mathcal{Q}_{goal}$; $\mathcal{Q}_{safe}$; $N$, $e_l$, and $\Delta t$\\
	\textbf{Initialization:}  $v_{start} =(\boldsymbol{q}_{init},\texttt{index}=0)$, $i=1$, $\mathcal{V}=\{v_{init}\}$, $\mathcal{E}=\emptyset$, $\mathcal{G} = \emptyset$, $\texttt{GoalReached} = \texttt{False}$, $\,\texttt{adapFlag}=\texttt{True}$, $\,\texttt{optDensityFlag}\,=\,\texttt{False}$, and $r = \eta$\label{line:alg1_init}\\
	$\mathcal{T}\leftarrow(\mathcal{V},\,\mathcal{E})$\\
	\While{$i<N$}
	{$\boldsymbol{q}_{s}\leftarrow\texttt{Sample}(\mathcal{G},\texttt{adapFlag})$ \label{Line: Alg1.Sampling}\\
		$v_{nearest} \leftarrow \texttt{Nearest}(\boldsymbol{q}_s)$\label{line:1st_v_nearest}\\

		$\boldsymbol{u}_{OL} = \texttt{Comp\_uOL}(\boldsymbol{q}_{s},v_{nearest})$\label{line:alg1_compuOL}\\
		$\mathcal{V}\leftarrow\mathcal{V}\cup\{v_{new}\leftarrow\texttt{ExpLPlanning}(v_{nearest},\boldsymbol{u}_{OL})\}$ \label{line: alg1_ SafeSteer}\\
		$r = \min\{\lambda(\log(|\mathcal{V}|)/|\mathcal{V}|)^{1/(d+1)},\eta\} $\label{line:alg1_ball_radi}\\

		$\mathcal{V}_{near}\leftarrow\texttt{Near}(\mathcal{T},r,v_{new})$\\
		\ForEach{$v_{near}\in\mathcal{V}_{near}$}
		{	\label{line:alg1_ begining_rewiring}
			$v'\leftarrow\texttt{ExtLPlanning}(v_{near},v_{new})$\label{line:alg1_ext_local_trajP1}\\
			$c'=v_{near}.\texttt{cToCome}+\texttt{Cost}(v',v_{near})$\\
			\If{$c'<c_{min}$}
			{
				$v_{new}'\leftarrow v'$;
				$v_{min}\leftarrow v_{near}$;
				$c_{min}\leftarrow c'$
			}
		}

		$\mathcal{T}\leftarrow\texttt{AddChild}(\mathcal{T},v_{min},v_{new})$\\

		\ForEach{$v_{near}\in \mathcal{V}_{near}$}
		{
			$v'\leftarrow \texttt{ExtLPlanning}(v_{new},v_{near})$\label{line:alg1_ext_local_trajP2}\\
			\If{($v_{new}.\texttt{cToCome}+Cost(v_{new},v')< v_{near}.\texttt{cToCome})$}
			{
				$\mathcal{T}\leftarrow\texttt{Reconnect}(v_{new},v_{near},\mathcal{T})$\\
				$\texttt{UpcToCome}(v_{near},\texttt{cToCome}(v_{new}+Cost(v')))$\label{line:alg1_ end_rewiring}
			}
		}
		$\mathcal{T},\;\mathcal{G}\leftarrow\texttt{extToGoal}(\mathcal{T},v_{new},\texttt{adapFlag})$;	$i\leftarrow i+1$ \label{line:alg1_ CBF-RRT*_ext2goal}}\Return$\mathcal{T}$
	\caption{Adaptive CBF-RRT$^\ast$}
	\label{alg:Ada_CBF-RRT*}
\end{algorithm}
\setlength{\textfloatsep}{14pt}
\setlength{\textfloatsep}{15pt}
\begin{algorithm}[t]\scriptsize
	$\texttt{u}\sim \texttt{Uniform}(0,1)$\\
	\eIf{$\texttt{u}\;\leq 0.5\; \wedge\;\mathcal{G}\neq\emptyset$}
		{
			\eIf{\texttt{optDensityFlag}}
				{
					$X\sim\hat{\mathfrak{g}}^\ast(\boldsymbol{q})$\\
					\Return $(\boldsymbol{q})$\label{line:alg_ smple_from_opt_est}
				}
				{
					\eIf{$\texttt{mod}(|\mathcal{G}|, n_v)=0$}
						{
					    $\mathfrak{E}\leftarrow\texttt{Quantile}(\mathcal{G},\varrho)$\Comment{Assign the elite set}\\\label{line:alg2_strt_discrtzng}
						$\hat{\mathfrak{g}}(\boldsymbol{q})\leftarrow\texttt{CE\_Estimation}(\mathfrak{E},m)$\Comment{Compute PDF of $\mathfrak{E}$}\label{line:Alg2_CE_estimation}
							\\ \Return$\boldsymbol{q}\sim\hat{\mathfrak{g}}(\boldsymbol{q})$ \label{line:alg2_smpl_form_g_est}
						}
						{
							\Return $\boldsymbol{q}\sim\texttt{Uniform}(\mathcal{Q})$
						}
				}
		}{\Return $\boldsymbol{q}\sim\texttt{Uniform}(\mathcal{Q})$}
	\caption{$\boldsymbol{q}_{s}\leftarrow\texttt{Sample}(\mathcal{G},\mathcal{T},m)$}
	\label{alg:Sample}
\end{algorithm}
\setlength{\textfloatsep}{14pt}

\subsection{Probabilistic Completeness of the Algorithm}\label{subsec:CBFRRT*_completenss}
In this section we establish, under some assumptions, the probabilistic completeness of CBF-RRT$^\ast$.

We formulate some definitions that are needed to establish the main completeness result (Theorem \ref{Thm:cmpltness_CBF-RRT*}). For any $\varphi\in\mathcal{G}$, we define $\varphi_{\boldsymbol{q}}:=\{\mathbf{proj}_{\mathcal{Q}}(\varphi(t,T))|\varphi\in\mathcal{G},\;t\in[0,T]\}$ and $\varphi_{\boldsymbol{u}}:=\{\mathbf{proj}_{\mathcal{U}}(\varphi(t,T))|\varphi\in\mathcal{G},\;t\in[0,T]\}$, where $\mathbf{proj}_{\mathcal{U}}:\mathcal{U}\times\mathcal{Q}\to\mathcal{U}$ and $\mathbf{proj}_{\mathcal{Q}}:\mathcal{U}\times\mathcal{Q}\to\mathcal{Q}$ are the projection of the control inputs and the produced trajectory of $\varphi$, respectively, i.e.,  $\mathbf{proj}_{\mathcal{U}}(\varphi(t,T)) = \boldsymbol{u}(t)$ and $\mathbf{proj}_{\mathcal{Q}}(\varphi(t,T)) = \boldsymbol{q}(t)$. 

Following \cite{KaramanRRTstarIJRR}, we say that OMPP \ref{pr:opt_mp_problem} is robustly feasible with minimum clearance $\varepsilon>0$, if there exist control inputs $\varphi_{\boldsymbol{u}}$ which produce trajectory $\varphi_{\boldsymbol{q}}$, and $\varphi\in\mathcal{G}$, such that the distance between any configuration $\boldsymbol{q}\in\varphi_{\boldsymbol{q}}$ and any obstacle configuration $\boldsymbol{q}_{o}\in\mathcal{Q}\setminus\mathcal{Q}_{safe}$ is at least $\varepsilon/2$.  

\begin{theorem}[\cite{kinodynamic_wSimulationFrwrd}]\label{Thm:Bndd_trajs}
Let $\boldsymbol{q}(t)\in\mathcal{Q}_{safe},t\in[0,T]$ and $\boldsymbol{q}^\prime(t)\in\mathcal{Q}_{safe},t\in[0,T]$ be trajectories of system (\ref{eq:aff_system}) under $\boldsymbol{u}(t),\;t\in[0,T]$ and $\boldsymbol{u}^\prime(t),\;t\in[0,T]$ control inputs, respectively, such that they have the same initial configuration $\boldsymbol{q}(0) = \boldsymbol{q}^\prime(0)$, then, for $a,b\in\mathbb{R}_{>0}$, the following bound holds 
\begin{equation}\label{ineq:BoundTraj}
||\boldsymbol{q}(T) - \boldsymbol{q}^\prime(T)||<a\;e^b\;\sup_t(||\boldsymbol{u}(t)-\boldsymbol{u}^\prime(t)||)    
\end{equation}
\end{theorem}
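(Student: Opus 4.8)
The plan is to bound the divergence of the two trajectories via a standard Gr\"onwall argument, exploiting the locally Lipschitz structure of $f$ and $g$ together with the boundedness of the admissible control set $\mathcal{U}$. First I would introduce the error signal $e(t) := \boldsymbol{q}(t) - \boldsymbol{q}'(t)$, which satisfies $e(0) = 0$ by the shared-initial-condition hypothesis. Differentiating and substituting the dynamics (\ref{eq:aff_system}) gives
\[
\dot{e}(t) = \big(f(\boldsymbol{q}(t)) - f(\boldsymbol{q}'(t))\big) + \big(g(\boldsymbol{q}(t)) - g(\boldsymbol{q}'(t))\big)\boldsymbol{u}(t) + g(\boldsymbol{q}'(t))\big(\boldsymbol{u}(t) - \boldsymbol{u}'(t)\big),
\]
where the key algebraic step is to add and subtract $g(\boldsymbol{q}'(t))\boldsymbol{u}(t)$ so as to isolate the two summands driven by the \emph{state} error from the single summand driven by the \emph{input} error. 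This decomposition is exactly what lets the state-error terms be folded into a Gr\"onwall kernel while the input-error term becomes the forcing.

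Next, since both trajectories remain in $\mathcal{Q}_{safe}$ over the compact interval $[0,T]$, their images lie in a fixed compact set on which $f$ and $g$ admit uniform Lipschitz constants $L_f, L_g$ and $g$ is bounded, say $||g(\cdot)|| \le G$; moreover boundedness of $\mathcal{U}$ gives $||\boldsymbol{u}(t)|| \le U$ for all $t$. Passing to the integral form $e(t) = \int_0^t \dot{e}(s)\,ds$, taking norms, and applying these bounds together with $t \le T$ yields
\[
||e(t)|| \;\le\; (L_f + L_g U)\int_0^t ||e(s)||\,ds \;+\; G\,T\,\sup_{s\in[0,T]}||\boldsymbol{u}(s) - \boldsymbol{u}'(s)||.
\]
Applying the integral form of Gr\"onwall's inequality to this scalar bound and evaluating at $t = T$ gives $||e(T)|| \le G\,T\,e^{(L_f + L_g U)T}\sup_t ||\boldsymbol{u}(t) - \boldsymbol{u}'(t)||$, so the claimed estimate (\ref{ineq:BoundTraj}) holds with $a := G\,T$ and $b := (L_f + L_g U)T$.

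The main obstacle I anticipate is justifying the \emph{uniformity} of the constants: because $f$ and $g$ are only assumed locally Lipschitz, one must first argue that both trajectories remain within a single fixed compact region, so that one common pair $(L_f, L_g)$ and one common bound $G$ apply across all of $[0,T]$ rather than varying along the path. This is precisely where the hypotheses $\boldsymbol{q}(t),\boldsymbol{q}'(t)\in\mathcal{Q}_{safe}$ and the boundedness of $\mathcal{U}$ do the real work; once a common compact set is fixed, the remaining estimates are routine. A secondary, purely cosmetic point is that Gr\"onwall delivers a non-strict inequality whereas the statement is strict---this can be recovered by taking any one of the bounding constants ($G$, $L_f$, or $L_g$) to be a strict overestimate, which loosens the final inequality to a strict one except in degenerate cases.
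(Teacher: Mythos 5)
Your proposal is correct, but note that the paper itself offers no proof of Theorem~\ref{Thm:Bndd_trajs} to compare against: the result is imported as-is from the cited reference \cite{kinodynamic_wSimulationFrwrd}, and your Gr\"onwall argument --- the add-and-subtract decomposition $g(\boldsymbol{q}')\bigl(\boldsymbol{u}-\boldsymbol{u}'\bigr)$ isolating the input error, uniform Lipschitz and boundedness constants on a compact set containing both trajectory images, then the integral Gr\"onwall inequality giving $a = G\,T$ and $b = (L_f + L_g U)T$ --- is precisely the standard proof used there. The two caveats you flag yourself are the only substantive ones: the estimate genuinely needs bounded admissible controls (or a bound on $||\boldsymbol{u}||$ along with boundedness of $g$ on the relevant compact set), an assumption the paper never states for $\mathcal{U}$, and the strict inequality in (\ref{ineq:BoundTraj}) is unprovable in the degenerate case $\boldsymbol{u}\equiv\boldsymbol{u}'$ (both sides vanish), so the honest form of the bound is with $\leq$; neither issue is a gap in your reasoning, but rather looseness in the statement as quoted.
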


\begin{remark}\label{rmrk:TunableParamsOfCBFs}
Consider steering from any $\boldsymbol{q}_s\in\mathcal{Q}_{safe}$ towards any reachable $\boldsymbol{q}_f\in\mathcal{Q}_{safe}$ using $\boldsymbol{u}_{LP}(t),\;t\in[0,T_{OL}]$, that is computed by the CBF-QP synthesis (see \S \ref{subbsec:Exp_LMP}). One can tune the constants of the class $\mathcal{K}$ functions, $\alpha_1,\,...,\,\alpha_{\rho}$, of the HOCBF (see Defintion \ref{def:HOCBF}) such that the produced $\boldsymbol{u}_{LP}$ let the bound (\ref{ineq:BoundTraj}) be given as $||\boldsymbol{q}_f - \boldsymbol{q}_f^\prime||=\mu<\frac{\varepsilon}{4}$, where $\boldsymbol{q}_f$ and $\boldsymbol{q}^\prime_f$  are the configurations at time $T_{OL}$ of the produced trajectories under the open-loop control inputs $\boldsymbol{u}_{OL}$ and the CBF-QP control inputs $\boldsymbol{u}_{LP}$, respectively, and $\mu\in\mathbb{R}_{>0}$.    
\end{remark}
\begin{theorem}
\label{Thm:cmpltness_CBF-RRT*}
 CBF-RRT$^\ast$ is probabilistically complete.
\end{theorem}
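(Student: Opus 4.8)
The plan is to adapt the classical covering-tube argument for probabilistic completeness of RRT-type algorithms to the CBF-based setting, where the crucial novelty is that explicit collision checking is replaced by safety-by-construction and exact steering is replaced by obstacle-avoiding (hence possibly deviating) steering. I assume OMPP \ref{pr:opt_mp_problem} is robustly feasible with minimum clearance $\varepsilon>0$, so that there exists $\varphi\in\mathcal{G}$ whose trajectory $\varphi_{\boldsymbol q}$ keeps every configuration at distance at least $\varepsilon/2$ from the obstacle set $\mathcal{Q}\setminus\mathcal{Q}_{safe}$. First I would discretize this reference trajectory into a finite sequence of waypoints $\boldsymbol q_{start}=\boldsymbol q_0,\boldsymbol q_1,\dots,\boldsymbol q_k\in\mathcal{Q}_{goal}$, chosen so that consecutive waypoints are separated by at most the effective step length, and surround each waypoint with a ball $B_i$ of radius on the order of $\iota$. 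Because each waypoint lies at clearance at least $\varepsilon/2$ while the ball radius is a small fraction of $\varepsilon$, every ball lies entirely inside $\mathcal{Q}_{safe}$, and the union of balls forms a safe tube connecting $\boldsymbol q_{start}$ to $\mathcal{Q}_{goal}$.

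Second, I would establish a one-step propagation lemma: if the tree already contains a vertex in $B_i$, then at any iteration there is a probability bounded below by some $p_i>0$ that the algorithm extends a vertex into $B_{i+1}$. The lower bound rests on two facts. Crucially, the sampler (Algorithm \ref{alg:Sample}) returns a uniform sample over $\mathcal{Q}$ with probability at least $1/2$ regardless of the adaptive cross-entropy machinery, so the probability of drawing $\boldsymbol q_s\in B_{i+1}$ is at least $\tfrac12\,\mathrm{vol}(B_{i+1})/\mathrm{vol}(\mathcal{Q})>0$; this is precisely why the adaptive sampling does not destroy completeness. Given such a sample with nearest vertex in $B_i$, the call $\texttt{Comp\_uOL}$ (under Assumption \ref{asm:OL_ctrl_input}) produces open-loop controls steering a distance $\eta$ toward $\boldsymbol q_s$, and $\texttt{ExpLPlanning}$ runs the CBF-QP to produce the actual safe trajectory. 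By Theorem \ref{Thm:Bndd_trajs} and Remark \ref{rmrk:TunableParamsOfCBFs}, the CBF-QP endpoint deviates from the open-loop endpoint by at most $\mu<\varepsilon/4$; combining this with the triangle inequality and the choice $\eta=\tfrac{\varepsilon}{4}+\mu+2\iota$ with $0<\iota<\tfrac{\varepsilon}{4}-\mu$, I would show the new vertex lands inside $B_{i+1}$. Safety-by-construction guarantees the extension never aborts due to collision, so no rejection occurs.

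Third, with a uniform lower bound $p=\min_i p_i>0$ on the per-iteration propagation probability and a finite number $k$ of balls to traverse, a standard Borel-Cantelli / geometric-tail argument finishes the proof: the probability that the tree has not propagated through the entire tube after $N$ iterations is bounded by a quantity decaying to zero as $N\to\infty$ (each stage is reached almost surely, and the stages are reached in order). Hence the probability that CBF-RRT$^\ast$ produces a trajectory from $\boldsymbol q_{start}$ into $\mathcal{Q}_{goal}$ through $\mathcal{Q}_{safe}$ tends to $1$, which is probabilistic completeness.

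The hard part will be the one-step propagation geometry, specifically certifying that the CBF-induced deviation cannot knock the extended vertex out of the target ball $B_{i+1}$ even when the nominal straight-line path would graze an obstacle. In ordinary RRT this is handled trivially by collision checking and exact steering; here the trajectory bends to preserve safety, so the entire burden falls on the deviation bound $\mu<\varepsilon/4$ of Remark \ref{rmrk:TunableParamsOfCBFs} together with the clearance $\varepsilon/2$ of the reference tube. I would need to verify that the covering-ball radius can be taken large enough (order $\iota$) to absorb the deviation while remaining small enough (a fraction of $\varepsilon$) to stay collision-free, which is exactly the role of the constraint $0<\iota<\varepsilon/4-\mu$; a secondary subtlety is confirming that the nearest-vertex selection indeed picks a vertex in $B_i$ rather than a spurious closer vertex, which is the usual induction bookkeeping in such proofs.
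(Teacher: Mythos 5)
Your proposal takes essentially the same route as the paper's proof: both cover the clearance-$\varepsilon$ reference trajectory with a finite sequence of balls, use the deviation bound $\mu$ of Remark \ref{rmrk:TunableParamsOfCBFs} together with the choice $\eta=\frac{\varepsilon}{4}+\mu+2\iota$, $0<\iota<\frac{\varepsilon}{4}-\mu$ to lower-bound the per-iteration probability of propagating from one ball into the next, and conclude with the standard Bernoulli-trials / geometric-tail argument from the RRT completeness literature. The only differences are complementary bookkeeping: you explicitly handle the adaptive sampler by noting that Algorithm \ref{alg:Sample} returns a uniform sample with probability at least $1/2$ (a point the paper leaves implicit), whereas the paper explicitly dispatches the rewiring step by reducing CBF-RRT$^\ast$ to CBF-RRT, observing that rewiring uses exact steering and so does not alter the vertex set (a point you leave implicit).
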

\begin{proof}
    RRT$^\ast$ completeness is implied by the completeness of RRT (see Theorem 23 in \cite{KaramanRRTstarIJRR}). Following this result, we prove the completeness of CBF-RRT, then the completeness of CBF-RRT$^\ast$ will follow directly since, using the same sequence of samples, its tree is the rewired CBF-RRT tree and the fact that the rewiring procedure is accomplished by exact steering. 
    
    CBF-RRT is implemented by mitigating the rewiring procedure, Lines \ref{line:alg1_ begining_rewiring}-\ref{line:alg1_ end_rewiring} in Algorithm \ref{alg:Ada_CBF-RRT*}. Given that $\texttt{ExpLPlanning}$ (the only local motion planner in CBF-RRT) generates control inputs using the CBF-QP controller synthesis (\S \ref{subbsec:Exp_LMP}), the extended trajectories of the tree are guaranteed to be collision-free, hence, by leveraging Theorem 2 in \cite{RRTcompleteness}, we only need to prove that the incremental trajectory will propagate to reach $\mathcal{Q}_{goal}$.

    Assume that the trajectory $\varphi_{\boldsymbol{q}}$ of the solution of OMPP with $\varepsilon$ clearance has a length $L$. Considering $m+1$ equidistant configurations $\boldsymbol{q}_{i}\in\varphi_{\boldsymbol{q}},\;i=1,\dots,m+1$, where $m=\left\lfloor\frac{4L}{\varepsilon} \right\rfloor$, we define a sequence of balls with radius $\varepsilon/4$ that are centered at these configurations. For configuration $\boldsymbol{q}_i$, such ball is given by: $\mathfrak{B}_{\frac{\varepsilon}{4}}(\boldsymbol{q}_i):=\{\boldsymbol{q}_b\;|\;||\boldsymbol{q}_i-\boldsymbol{q}_b||\leq\frac{\varepsilon}{4}\}$, see Figure \ref{fig:conseqB's} for illustration of two consecutive balls. For the consecutive configurations $\boldsymbol{q}_{i},\boldsymbol{q}_{i+1}\in\varphi_{\boldsymbol{q}}$, we want to prove that starting from $\boldsymbol{q}_s\in\mathfrak{B}_{\frac{\varepsilon}{2}}(\boldsymbol{q}_i)$ the exploratory local motion planner $\texttt{ExpLPlanning}$ is able to generate a motion trajectory that its end configuration $\boldsymbol{q}_f^\prime$ fall in $\mathfrak{B}_{\frac{\varepsilon}{4}}(\boldsymbol{q}_{i+1})$. Given Remark \ref{rmrk:TunableParamsOfCBFs}, we assign $\eta = \frac{\varepsilon}{4}+\mu+2\iota$ and $0<\iota<\frac{\varepsilon}{4}-\mu$. Accordingly, we assign $\mathfrak{B}_{\eta}(\boldsymbol{q}_s)$ and $\mathfrak{B}_{\frac{\varepsilon}{4}-\mu-\iota}(\boldsymbol{q}_{i+1})$ at $\boldsymbol{q}_s$ and $\boldsymbol{q}_{i+1}$, respectively. Let $\mathcal{S}:=\mathfrak{B}_{\eta}(\boldsymbol{q}_s)\cap\mathfrak{B}_{\frac{\varepsilon}{4}-\mu-\iota}(\boldsymbol{q}_{i+1})$ denotes the successful potential end-configurations set, which is depicted as the magenta region in Figure \ref{fig:conseqB's}. For any $\boldsymbol{q}_f\in\mathcal{S}$, $\texttt{ExpLPlanning}$ will succeed to generate trajectories that fall in $\mathfrak{B}_{\mu}(\boldsymbol{q}_f)\subset\mathfrak{B}_{\frac{\varepsilon}{4}}(\boldsymbol{q}_{i+1})$, which is depicted as the union of the green and magenta regions in Figure \ref{fig:conseqB's}. Let $|.|$ denotes the Lebesgue measure, then, for $\boldsymbol{q}_s$, the probability of generating configurations in $\mathcal{S}$ is $p=\frac{|\mathcal{S}|}{|\mathcal{Q}|}$ and is strictly positive. The probability $p$ represents the success probability of the $k$ Bernoulli trials process that models generating $m$ successful outcomes of sampling configurations that incrementally reach $\mathcal{Q}_{goal}$ \cite{RRTcompleteness}. The rest of the proof follows directly the proof of Theorem 1 in \cite{RRTcompleteness}.
\end{proof}

\begin{figure}\centering
	\includegraphics[width=0.7\linewidth]{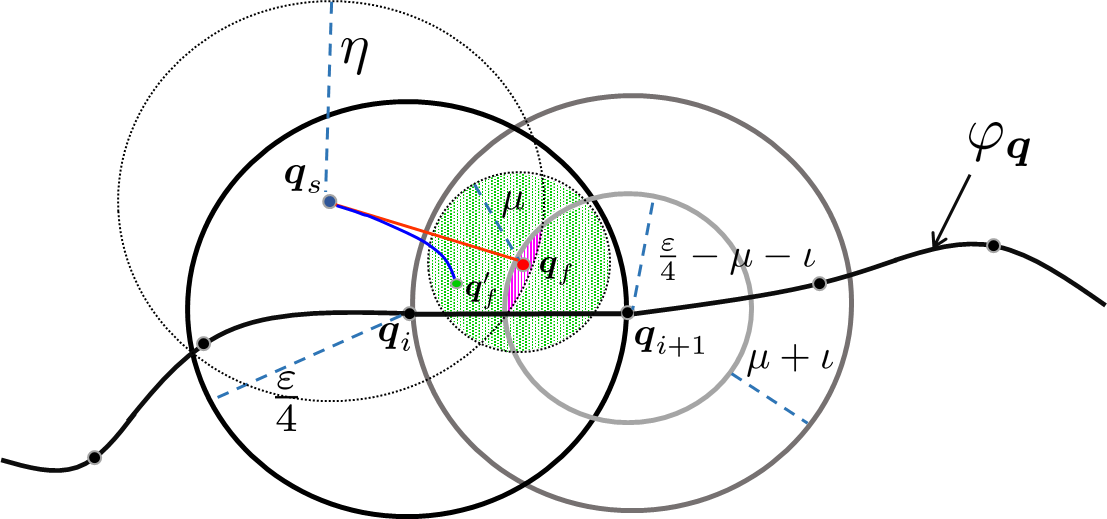}
	\caption{Depiction of two consecutive balls to illustrate Theorem \ref{Thm:cmpltness_CBF-RRT*}}
	\label{fig:conseqB's}
\end{figure}

\section{Adaptive Sampling for CBF-RRT$^\ast$}\label{sec:AdapCBF-RRT*}
We leverage CBF-RRT$^\ast$ with an adaptive sampling procedure, in which we use CEM to focus sampling around the optimal trajectory $\varphi_{\boldsymbol{q}}^\ast$. The motivation behind this approach is to approximate the solution of the OMPP \ref{pr:opt_mp_problem} with a fewer number of samples by focusing the sampling in promising regions of $\boldsymbol{Q}_{safe}$. 

\subsection{Adaptive Sampling using the cross-entropy Method}\label{subsec:CEM}

CEM \cite{Rubinstein1999} has been used to estimate the probability of rare events using IS. Conventional simulation methods, e.g. Monte-Carlo simulation, are prone to incorrectly estimate such probabilities to be zero \cite{Rubinstein1999}. CEM is a multi-stage stochastic optimization algorithm that iterates upon two steps: first, it generates samples from a current (parametric) SDF and computes the cost of each sample; second, it chooses an \emph{elite subset} of the generated samples for which their cost is below some threshold; finally, the elite subset of samples is used to estimate a probability density function (PDF) as if the elite samples were drown as i.i.d samples from such PDF. The estimated PDF will be used as the SDF for the next iteration. The algorithm terminate when it converges to a limiting PDF. It has been proven in \cite{Hu2007} that CEM with parametric SDF converges to a limiting distribution.

Going into more technical details, let a random variable $Q:\Omega\rightarrow\mathcal{Q}$ be defined over the probability space $(\Omega,\mathcal{F},P)$, where $\Omega$ is the sample space, $\mathcal{Q}$ is the range space, $\mathcal{F}$ denotes the $\sigma$-algebra subset of $\mathcal{Q}$, and $P$ is the probability measure over $\mathcal{F}$. CEM aims to find rare events with probability $P(\mathcal{J}(\boldsymbol{q})\leq\gamma)$, where $\gamma\in\mathbb{R}_{>0}$ is an optimal cost threshold and $\mathcal{J}:\mathcal{Q}\to\mathbb{R}_{>0}$ is the cost of a sampled solution $\boldsymbol{q}$. Computing $P(\mathcal{J}(\boldsymbol{q})\leq\gamma)$ is equivalent to computing the expectation $E[I(\{\mathcal{J}(\boldsymbol{q})<\gamma\})]$, where $I(.)$ is the indicator function. 

The work in \cite{Rubinstein1999} proposes to evaluate the expectation $E[I(\{\mathcal{J}(\boldsymbol{q})<\gamma\})]$ using the following estimator: $\hat{\ell}=\frac{1}{N}\sum\limits_{i=1}^{N}I(\{\mathcal{J}(\boldsymbol{q})<\gamma\})\frac{\mathfrak{f}(\boldsymbol{q}_i)}{\mathfrak{g}(\boldsymbol{q}_i)}$, where $\mathfrak{f}(\boldsymbol{q}_i)$ is the PDF of a sampled solution, $\boldsymbol{q}_i$, and $\mathfrak{g}(\boldsymbol{q}_i)$ is an underlying IS PDF. 
Choosing $\mathfrak{g}(\boldsymbol{q})^\ast=I(\{\mathcal{J}(\boldsymbol{q})<\gamma\})\mathfrak{f}(\boldsymbol{q})/\hat{\ell}$ will yield the best estimate of $\hat{\ell}$. However, this solution is hypothetical, since it involves $\hat{\ell}$, which is the entity that we want to estimate in the first place. Instead, $\mathfrak{g}(\boldsymbol{q})^\ast$ is solved in a multi-stage manner, where at each stage the elite subset of samples is used to estimate $\mathfrak{g}(\boldsymbol{q})$ until the CE between $\mathfrak{g}(\boldsymbol{q})^\ast$ and $\mathfrak{g}(\boldsymbol{q})$ is minimized. The CE is related to the Kullback-Leibler divergence, $D_{KL}(\mathfrak{g}^\ast(\boldsymbol{q})||\mathfrak{g}(\boldsymbol{q})) = \int_{\mathcal{Q}} \mathfrak{g}^\ast(\boldsymbol{q}) \ln (\mathfrak{g}^\ast(\boldsymbol{q})/\mathfrak{g}(\boldsymbol{q}))\; d\boldsymbol{q}$, and minimizing it implies minimizing the CE.

\subsection{Adaptive CBF-RRT$^\ast$}
 In the context of CBF-RRT$^\ast$, one could ask the following question: what is the probability of sampling configurations that lie precisely on $\varphi_{\boldsymbol{q}}^\ast$ (the solution of OMPP \ref{pr:opt_mp_problem})? It can be easily seen that it is an extremely small probability. Kobalirov \cite{Kobilarov2012e} proposes to use CEM to estimate the probability of generating samples that lie on $\varphi_{\boldsymbol{q}}^\ast$ using a mixture of Gaussian models for the proposal distribution $\mathfrak{g}(\boldsymbol{q})$. For the planning problem, however, it is hard to know, prior to planning, how promising regions of $\mathcal{Q}$ are distributed in order to choose a suitable number of Gaussian models.

The challenge above has motivated us to use a nonparametric density estimate, namely the weighted \textit{Gaussian Kernel Density Estimate} (WGKDE) \cite{Botev2011}, instead of a mixture of Gaussian models.

To improve the SDF of CBF-RRT$^\ast$ using CEM, we need to generate a population of approximated solutions of the OMPP \ref{pr:opt_mp_problem}. To accumulate such population of solutions, $\texttt{extToGoal}$ procedure (Line \ref{line:alg1_ CBF-RRT*_ext2goal} in Algorithm \ref{alg:Ada_CBF-RRT*}) attempts to steer system (\ref{eq:aff_system}) from vertex $v_{new}$ to $\boldsymbol{q}_g\in\mathcal{Q}_{goal}$ using $\texttt{ExpLPlanning}$. If the final configuration of the produced trajectory lies in $\mathcal{Q}_{goal}$, a vertex, $v_g$, at that configuration, is created and added to $\mathcal{V}$ and an edge, $(v_{new},v_g)$, is added to $\mathcal{E}$. Accordingly, the generated control inputs and system trajectory, $\varphi$, is added to $\mathcal{G}$.

 
 Adapting the SDF of CBF-RRT$^\ast$ is detailed in Algorithm \ref{alg:Sample}. Consider an iteration of CBF-RRT$^\ast$ (Algorithm \ref{alg:Ada_CBF-RRT*}) with accumulated trajectories to $\mathcal{Q}_{goal}$ (i.e. $\mathcal{G}\neq\emptyset$), 
 the elite set, $\mathfrak{E}$, is assigned by choosing the trajectories of all $\varphi\in\mathcal{G}$ with $J(\varphi)\leq\gamma$, i.e $\mathfrak{E}=\{\varphi_{\boldsymbol{q}}\,|\,(\varphi_{\boldsymbol{u}},\varphi_{\boldsymbol{q}})=\varphi\in\mathcal{G};\;J(\varphi)\leq\gamma\}$. We pick $\gamma$ as the $\varrho^{th}$ percentile cost of $\varphi\in\mathcal{G}$; Rubinstein \textit{et al.} \cite{Rubinstein1999} suggests to assign $\varrho\in[0.01,0.1]$. Since the SDF of CBF-RRT$^\ast$ samples in $\mathcal{Q}$, we will use a sparse set of configurations of the elite trajectories ($\mathfrak{E}$) to estimate a PDF that will be used as an SDF for the next iteration. Let $\texttt{d\_elite}$ be a set of pairs of $e$ discrete configurations of each $\varphi_{\boldsymbol{q}}\in\mathfrak{E}$ with assigned cost of each configuration as the cost of the corresponding elite trajectory, i.e., $\texttt{d\_elite}(\mathfrak{E},e)=\{(\boldsymbol{q}_i,\mathcal{J}(\boldsymbol{q}_i))|i\in\{1,\dots,m\},\boldsymbol{q}_i\in\varphi_{\boldsymbol{q}},\;\forall\varphi_{\boldsymbol{q}}\in\mathfrak{E},\mathcal{J}(\boldsymbol{q}_i)=J(\varphi)\}$. The WGKDE of the discretized elite trajectories is computed by: $\hat{\mathfrak{g}}(\boldsymbol{q})\; = \; \sum\limits_{(\boldsymbol{q}_i,\mathcal{J}(\boldsymbol{q}_i))\in\texttt{d\_elite}(\mathfrak{E},m)}\tilde{w}_i\;K(\boldsymbol{q})$, where the normalized weight $\tilde{w}_i$ and the Gaussian kernel function $K(\boldsymbol{q})$ are computed, respectively, by: $\tilde{w}_i = 1-\frac{\mathcal{J}(\boldsymbol{q}_i)}{\sum\limits_{(\boldsymbol{q}_j,\mathcal{J}(\boldsymbol{q}_j))\in\texttt{d\_elite}(\mathfrak{E},m)}(\mathcal{J}(\boldsymbol{q}_j))}$ and $K_i(\boldsymbol{q})  = \frac{1}{\sqrt{2\,\pi}\sigma}\,exp{\left(\frac{-||\boldsymbol{q}-\boldsymbol{q}_i||^2}{2\sigma^2}\right)}$. The procedure $\texttt{CE\_Estimation}(\mathfrak{E},m)$ (Line \ref{line:Alg2_CE_estimation} in Algorithm \ref{alg:Sample}) performs the WGKDE from the elite trajectories and checks if the KL-divergence between the current estimate and the previous estimate bellow a certain threshold and update $\texttt{optDensityFlag}$ accordingly. 
 
As more vertices are added to $\mathcal{T}$, more trajectories that reach the goal are used in adapting the SDF. Finally, the algorithm converges to a limiting PDF (where in this case $\texttt{optDensityFlag}$ is set to $\texttt{True}$), which will be used as the final SDF of CBF-RRT$^\ast$.

\section{Simulation Example}\label{sec:sim_exps}
We consider generating motion plans using CBF-RRT, RRT$^\ast$, CBF-RRT$^\ast$, Adaptive CBF-RRT$^\ast$, and the CLF-CBF-QP-based exact motion planner for the unicycle drive robot of Example \ref{ex:localMP_unicycle}. The generated paths are depicted in Figure \ref{fig:sim_all_paths}, where the Adaptive CBF-RRT$^\ast$ (shown in solid blue path) appears to be the smoothest path because the algorithm keeps the extensions of the vertices to the goal as part of the tree. Even though keeping such extensions requires additional memory, they help to produce acceptable paths with fewer vertices, see Figure \ref{fig:20RunsSimulation}; moreover, these extensions are exploited for efficient sampling.

Figure \ref{fig:20RunsSimulation} shows the evolution of the path length with respect to the number of tree vertices of each implementation. For 20 independent runs of Adaptive CBF-RRT$^\ast$ the algorithm needed, on average, $392$ vertices to converge to a limiting sampling distribution, which leads to more efficient refinement of the path, see Figure \ref{fig:Demo_of_adaptiv_CBF-RRT*} for an illustration of the evolution of the IS density function. On the other hand, the other algorithms were able to find a path after the $200^{th}$ vertex.

\begin{figure}\centering
	\includegraphics[width=0.8\linewidth]{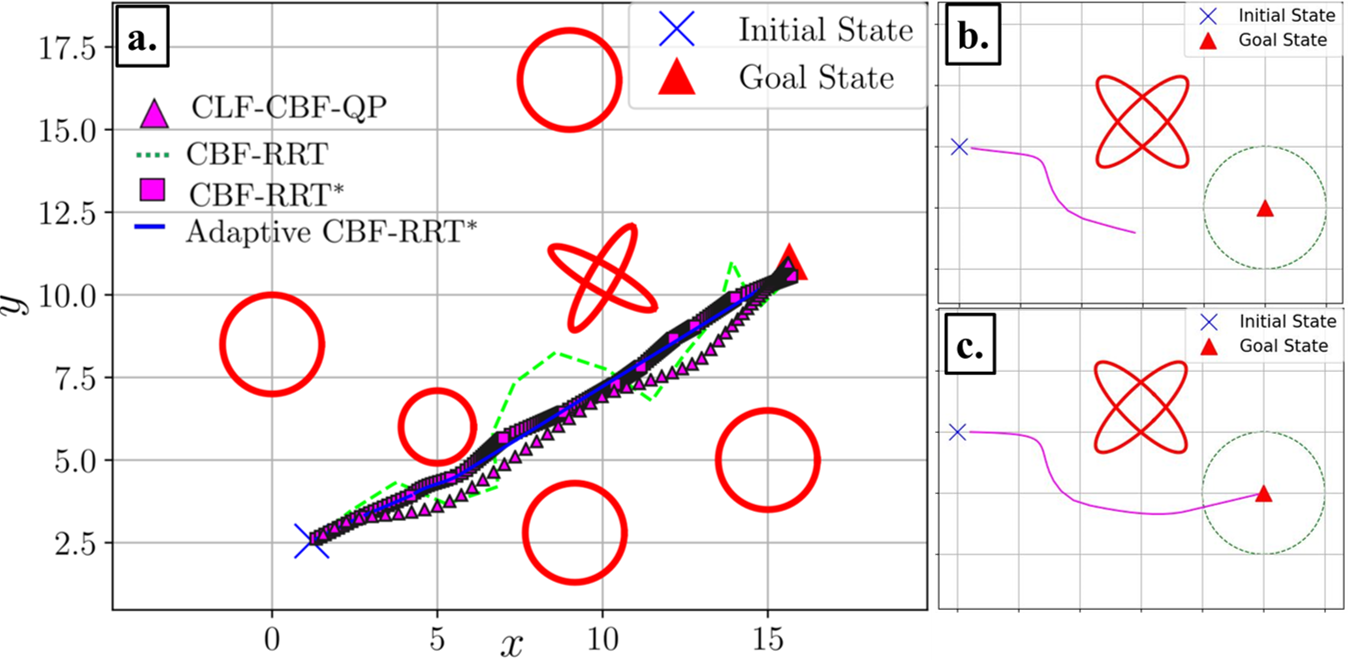}
	\caption{ \textbf{(a)} Multiple motion trajectories for generated using CBF-RRT (green dashed path), CBF-RRT$^\ast$ (magenta boxes path), Adaptive CBF-RRT$^\ast$ (solid blue path), and CLF-CBF-QP exact motion planning (magenta triangles path); \textbf{(b)} and \textbf{(c)} are motion trajectories generated using the CBF-QP based exploratory local motion planner and CBF-CLF-QP exact motion planner, respectively.  }
	\label{fig:sim_all_paths}
\end{figure}

\begin{figure}[htb]
	\begin{minipage}[t]{0.48\linewidth}\centering
		\includegraphics[width=4.5cm]{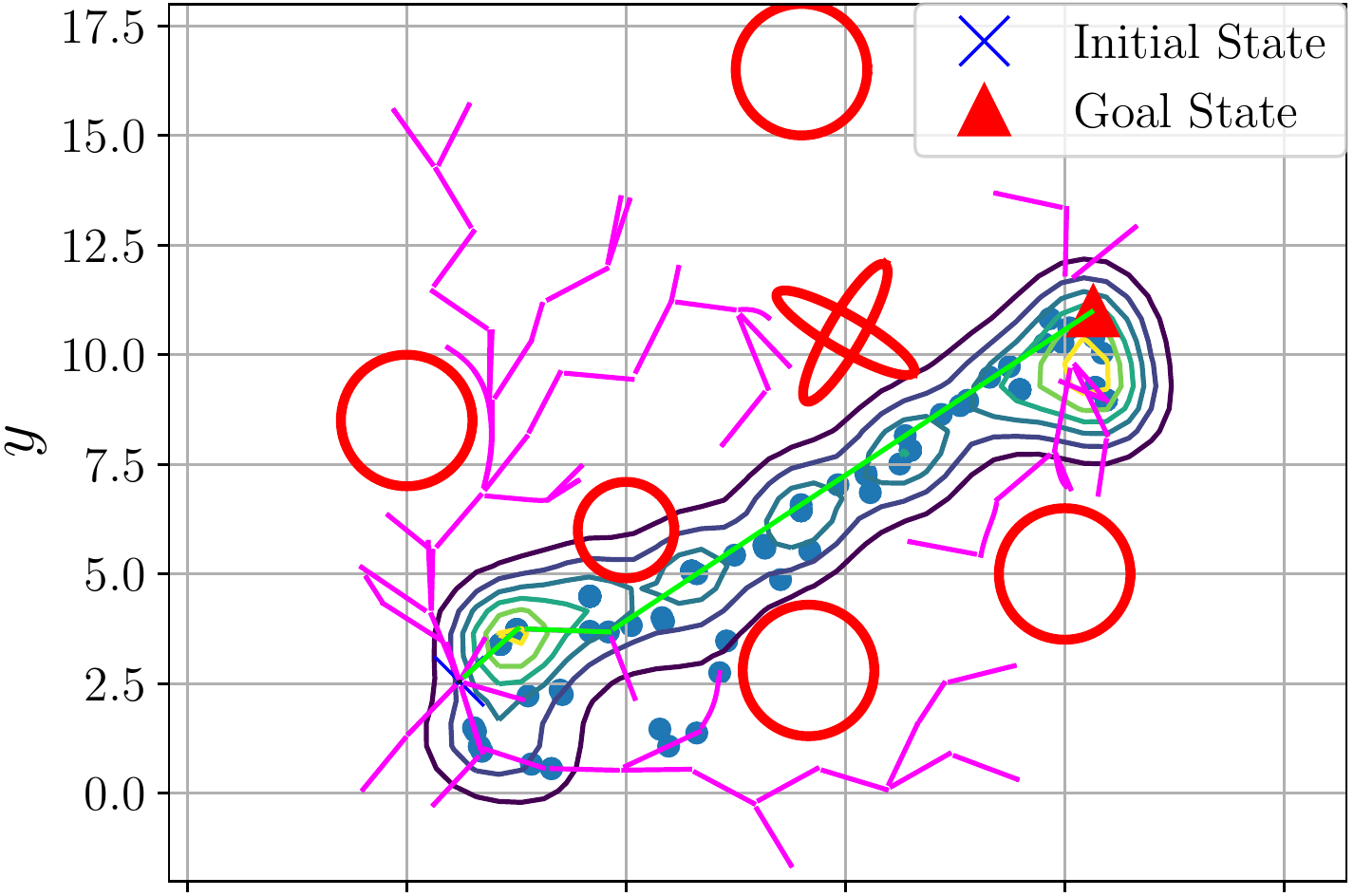}
		\medskip
	\end{minipage}\hfill
	\begin{minipage}[t]{0.5\linewidth}\centering
		\includegraphics[width=4cm]{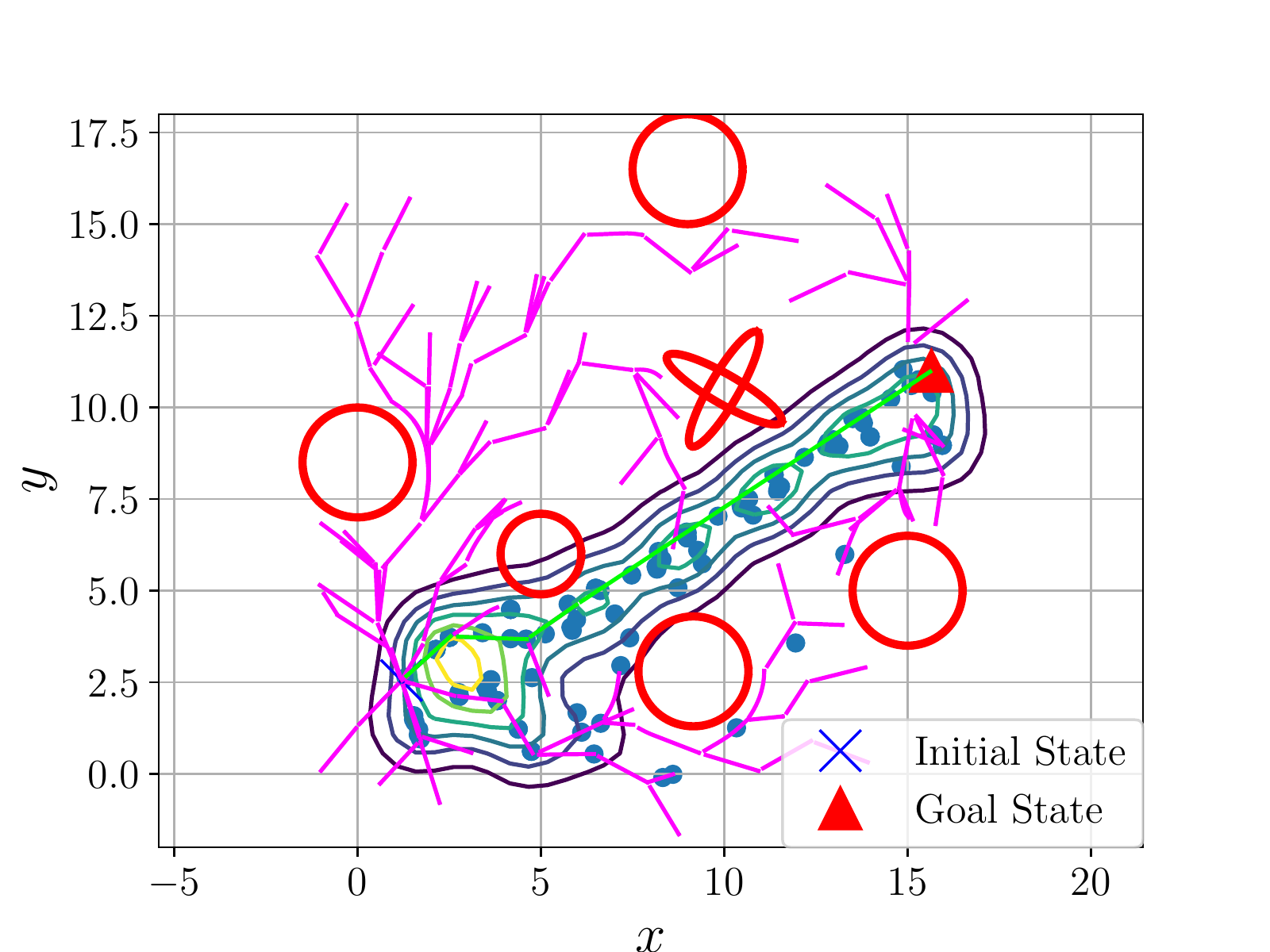}
		\medskip
	\end{minipage}
	\begin{minipage}[t]{0.49\linewidth}\centering
		\includegraphics[width=4.5cm]{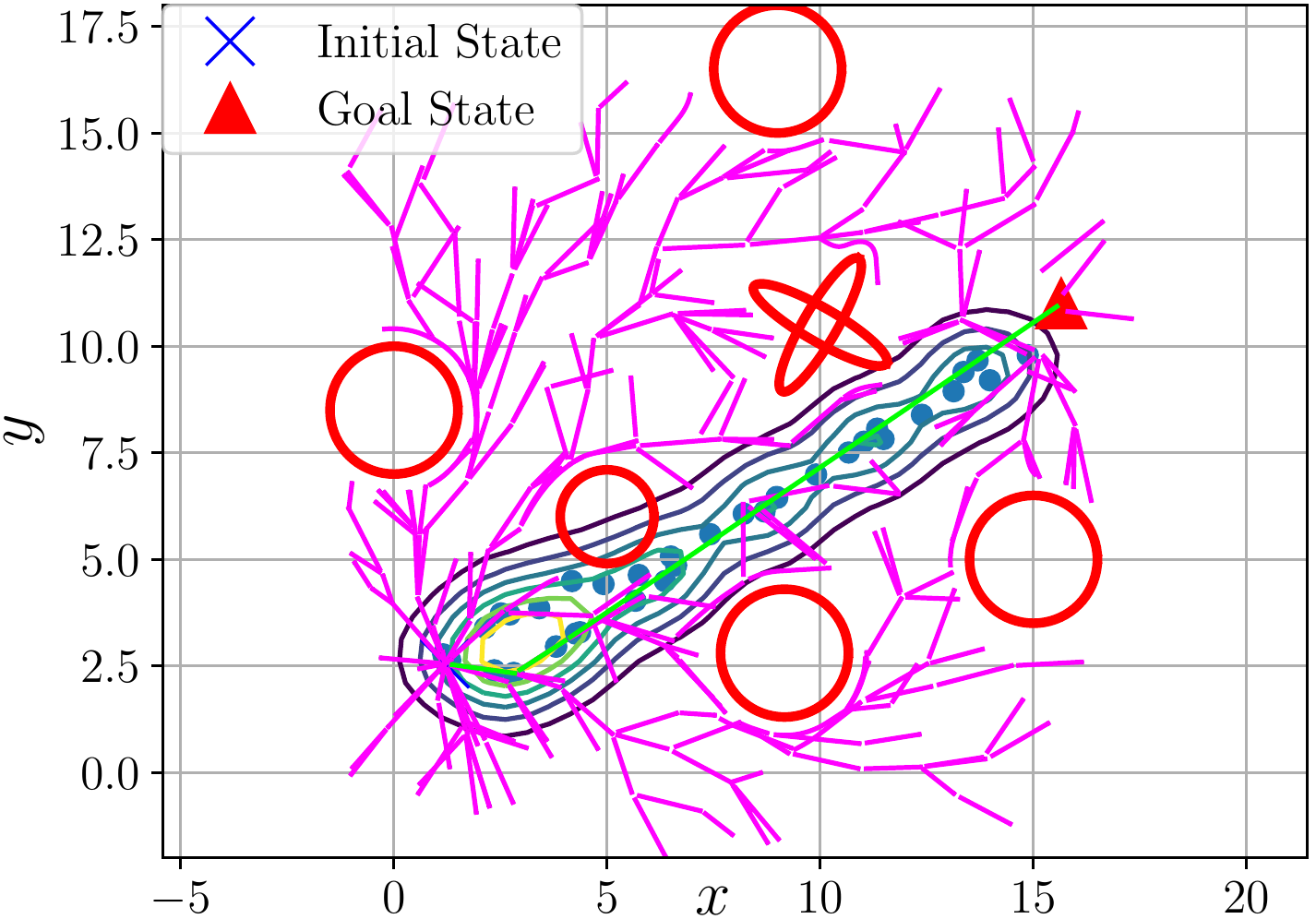}
		\medskip
	\end{minipage}\hfill
	\begin{minipage}[t]{0.49\linewidth}\centering
		\includegraphics[width=4cm]{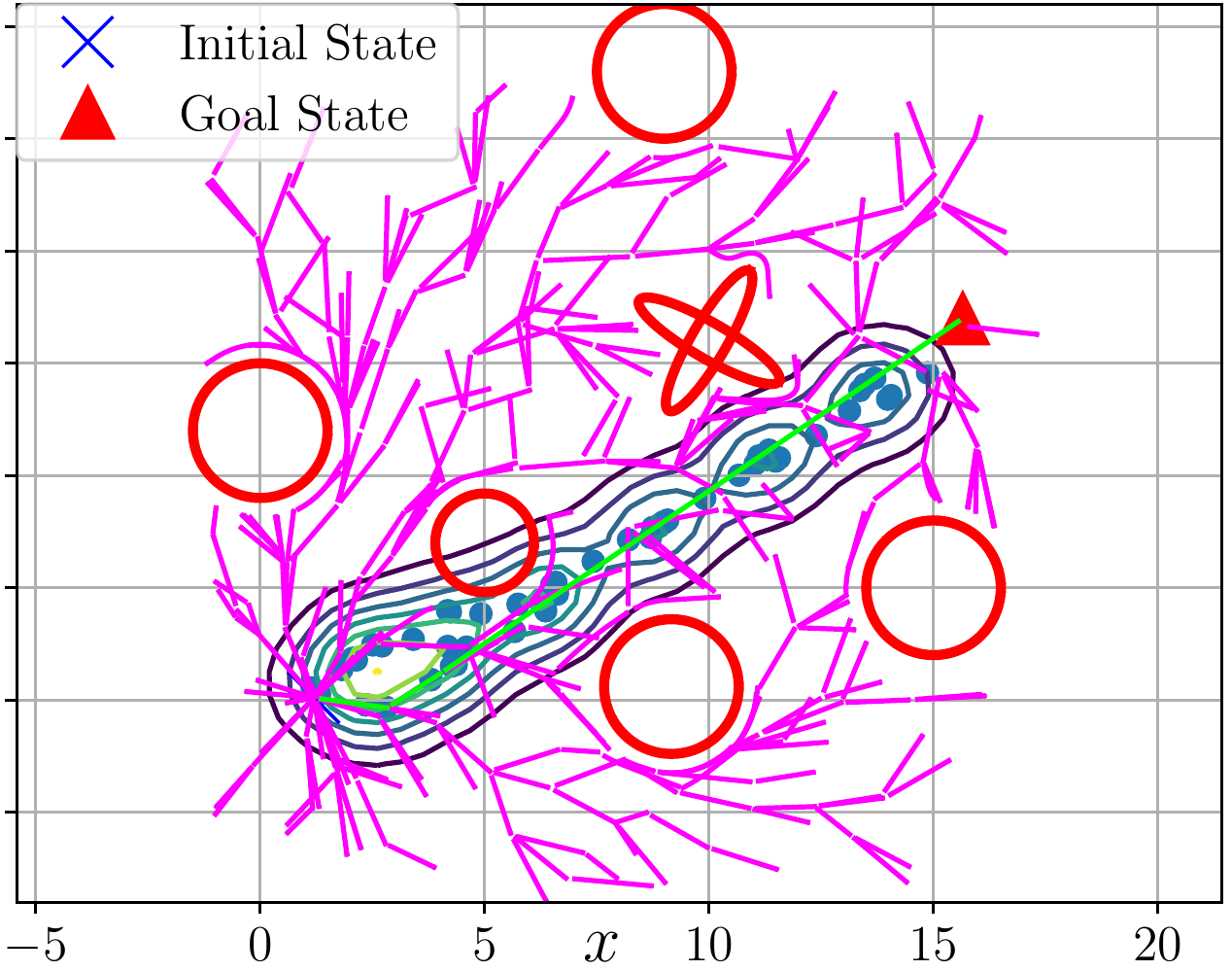}
		\medskip
	\end{minipage}
	\caption{The evolution of Adaptive CBF-RRT$^\ast$ tree with the elite samples at each CEM iteration. The estimated SDF level sets are shown at each adaptation iteration. In this run the adaptive sampling procedure terminated after 4 iterations, where the K-L divergence between the $3^{rd}$ (bottom right) and $4^{th}$ (bottom left) iterations is $0.06$. }
	\label{fig:Demo_of_adaptiv_CBF-RRT*}
\end{figure}

\begin{figure}[htb]
	\begin{minipage}[t]{0.49\linewidth}\centering
		\includegraphics[width=4.5cm]{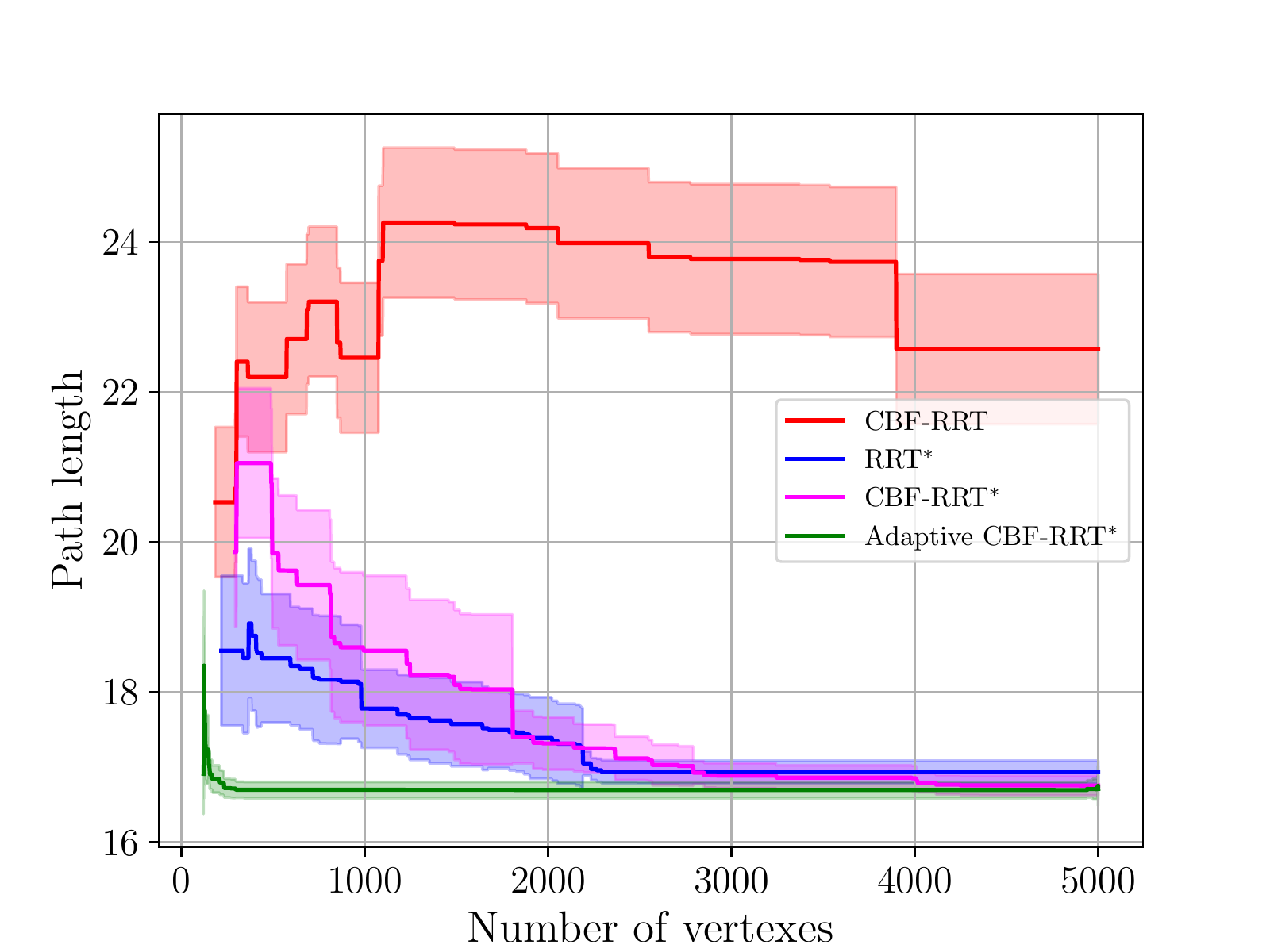}
		\medskip
	\end{minipage}\hfill
	\begin{minipage}[t]{0.49\linewidth}\centering
		\includegraphics[width=4.5cm]{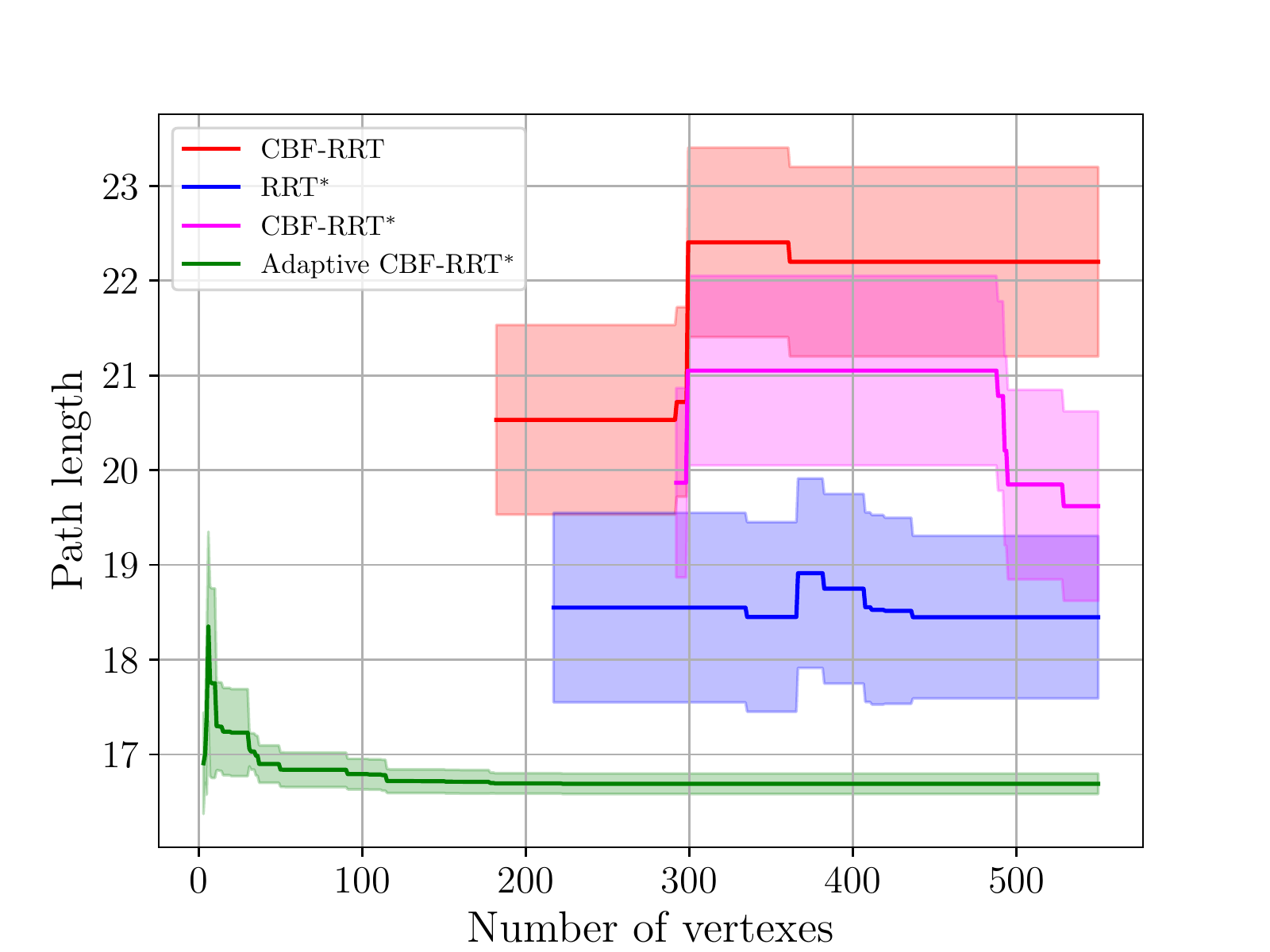}
		\medskip
	\end{minipage}
	\caption{The average path length of 20 independent runs of
RRT$^\ast$, CBF-RRT, CBF-RRT$^\ast$
, and Adaptive CBF-RRT$^\ast$ with $\%95$
confidence interval. Adaptive CBF-RRT$^\ast$ finds a feasible path to the goal as soon as the \texttt{extToGoal} procedure succeeds to steer to $\mathcal{Q}_{goal}$. }
	\label{fig:20RunsSimulation}
\end{figure}

\section{Conclusion and Future Work}
In this work, we introduced two variants of RRT$^\ast$, (Adaptive) CBF-RRT$^\ast$, to approximate a solution for the optimal motion planning problem. Inspired by CBF-RRT \cite{Yang2019a}, we utilized the recent advances in controlling safety-critical systems via CBFs to generate feasible local motion plans that are guaranteed to be collision-free. We prove, under some assumptions, that CBF-RRT$^\ast$ is probabilistically complete. Furthermore, and for efficient exploration, we equip CBF-RRT$^\ast$ with an IS procedure, which is inspired by CE-RRT$^\ast$ \cite{Kobilarov2012e}, and uses CEM algorithm WGKDE to estimate IS density functions. The procedure adapts the SDF of CBF-RRT$^\ast$ to focus the sampling around the optimal solution of the motion planning problem.    

The proposed variants are demonstrated through numerical simulation, and they have been shown to outperform analogous algorithms. 

In this work, we considered sampling in the configuration space of the robot. For future work, we consider extending the presented work to sample in the control inputs space, which might be simpler and could lead to better results. Given that we tested the proposed work to plan for unicycle robots, we consider plan for robots with other dynamics. Furthermore,  the asymptotic optimally of (Adaptive) CBF-RRT$^\ast$ need to be investigated.

\bibliographystyle{IEEEtran}
\bibliography{IEEEabrv,iros_refs}

\begin{thebibliography}{10}
\providecommand{\url}[1]{#1}
\csname url@rmstyle\endcsname
\providecommand{\newblock}{\relax}
\providecommand{\bibinfo}[2]{#2}
\providecommand\BIBentrySTDinterwordspacing{\spaceskip=0pt\relax}
\providecommand\BIBentryALTinterwordstretchfactor{4}
\providecommand\BIBentryALTinterwordspacing{\spaceskip=\fontdimen2\font plus
\BIBentryALTinterwordstretchfactor\fontdimen3\font minus
  \fontdimen4\font\relax}
\providecommand\BIBforeignlanguage[2]{{%
\expandafter\ifx\csname l@#1\endcsname\relax
\typeout{** WARNING: IEEEtran.bst: No hyphenation pattern has been}%
\typeout{** loaded for the language `#1'. Using the pattern for}%
\typeout{** the default language instead.}%
\else
\language=\csname l@#1\endcsname
\fi
#2}}

\bibitem{lavalle1998rapidly}
S.~M. LaValle, ``{Rapidly-exploring random trees: A new tool for path
  planning},'' in \emph{Ames, IA, USA}, 1998.

\bibitem{LaValle_RRTconnect}
J.~Kuffner and S.~LaValle, ``Rrt-connect: An efficient approach to single-query
  path planning,'' in \emph{Proceedings 2000 ICRA. Millennium Conference. IEEE
  International Conference on Robotics and Automation. Symposia Proceedings
  (Cat. No.00CH37065)}, vol.~2, 2000, pp. 995--1001 vol.2.

\bibitem{Branicky2003}
M.~S. Branicky, M.~M. Curtiss, J.~A. Levine, and S.~B. Morgan, ``{RRTs for
  Nonlinear, Discrete, and Hybrid Planning and Control},'' \emph{Proceedings of
  the IEEE Conference on Decision and Control}, vol.~1, pp. 657--663, 2003.

\bibitem{lavalle2001randomized}
S.~M. LaValle and J.~J. Kuffner~Jr, ``Randomized kinodynamic planning,''
  \emph{The international journal of robotics research}, vol.~20, no.~5, pp.
  378--400, 2001.

\bibitem{Yang2019h}
\BIBentryALTinterwordspacing
G.~Yang, B.~Vang, Z.~Serlin, C.~Belta, and R.~Tron, ``{Sampling-based motion
  planning via control barrier functions},'' \emph{ICACR 2019: Proceedings of
  the 2019 3rd International Conference on Automation, Control and Robots}, pp.
  22--29, 2019. [Online]. Available:
  \url{https://doi.org/10.1145/3365265.3365282}
\BIBentrySTDinterwordspacing

\bibitem{RRTcompleteness}
M.~Kleinbort, K.~Solovey, Z.~Littlefield, K.~E. Bekris, and D.~Halperin,
  ``Probabilistic completeness of rrt for geometric and kinodynamic planning
  with forward propagation,'' \emph{IEEE Robotics and Automation Letters},
  vol.~4, no.~2, pp. x--xvi, 2019.

\bibitem{KaramanRRTstarIJRR}
\BIBentryALTinterwordspacing
S.~Karaman and E.~Frazzoli, ``Sampling-based algorithms for optimal motion
  planning,'' \emph{The International Journal of Robotics Research}, vol.~30,
  no.~7, pp. 846--894, 2011. [Online]. Available:
  \url{https://doi.org/10.1177/0278364911406761}
\BIBentrySTDinterwordspacing

\bibitem{rev_asymptoticRRTstar}
K.~Solovey, L.~Janson, E.~Schmerling, E.~Frazzoli, and M.~Pavone, ``Revisiting
  the asymptotic optimality of rrt,'' in \emph{2020 IEEE International
  Conference on Robotics and Automation (ICRA)}, 2020, pp. 2189--2195.

\bibitem{Gammell2014}
J.~D. Gammell, S.~S. Srinivasa, and T.~D. Barfoot, ``Informed rrt*: Optimal
  sampling-based path planning focused via direct sampling of an admissible
  ellipsoidal heuristic,'' in \emph{2014 IEEE/RSJ International Conference on
  Intelligent Robots and Systems}, 2014, pp. 2997--3004.

\bibitem{Gammell2015}
------, ``{Batch Informed Trees (BIT): Sampling-based optimal planning via the
  heuristically guided search of implicit random geometric graphs},'' in
  \emph{IEEE International Conference on Robotics and Automation (ICRA), 2015},
  2015, pp. 3067--3074.

\bibitem{Kobilarov2012e}
M.~Kobilarov, ``{Cross-entropy motion planning},'' \emph{International Journal
  of Robotics Research}, vol.~31, no.~7, pp. 855--871, 2012.

\bibitem{Rubinstein1999}
R.~Rubinstein, ``{The Cross-Entropy Method for Combinatorial and Continuous
  Optimization},'' \emph{Methodology And Computing In Applied Probability},
  vol.~1, no.~2, pp. 127--190, 1999.

\bibitem{Karaman2010a}
S.~Karaman and E.~Frazzoli, ``{Optimal kinodynamic motion planning using
  incremental sampling-based methods},'' \emph{Proceedings of the IEEE
  Conference on Decision and Control}, pp. 7681--7687, 2010.

\bibitem{Kino_informedRRTstar}
D.~Yi, R.~Thakker, C.~Gulino, O.~Salzman, and S.~Srinivasa, ``Generalizing
  informed sampling for asymptotically-optimal sampling-based kinodynamic
  planning via markov chain monte carlo,'' in \emph{2018 IEEE International
  Conference on Robotics and Automation (ICRA)}, 2018, pp. 7063--7070.

\bibitem{Wu2020e}
\BIBentryALTinterwordspacing
A.~Wu, S.~Sadraddini, and R.~Tedrake, ``{R3T: Rapidly-exploring Random
  Reachable Set Tree for Optimal Kinodynamic Planning of Nonlinear Hybrid
  Systems},'' \emph{IEEE International Conference on Robotics and Automation
  (ICRA)}, pp. 4245--4251, 2020. [Online]. Available:
  \url{https://doi.org/10.3182/20070822-3-ZA-2920.00076}
\BIBentrySTDinterwordspacing

\bibitem{Ames2019}
A.~D. Ames, S.~Coogan, M.~Egerstedt, G.~Notomista, K.~Sreenath, and P.~Tabuada,
  ``{Control barrier functions: Theory and applications},'' \emph{arXiv}, pp.
  3420--3431, 2019.

\bibitem{Ames2014}
A.~D. Ames, K.~Galloway, K.~Sreenath, and J.~W. Grizzle, ``Rapidly
  exponentially stabilizing control lyapunov functions and hybrid zero
  dynamics,'' \emph{IEEE Transactions on Automatic Control}, vol.~59, no.~4,
  pp. 876--891, 2014.

\bibitem{Ames2014a}
A.~D. Ames, J.~W. Grizzle, and P.~Tabuada, ``{Control barrier function based
  quadratic programs with application to adaptive cruise control},'' in
  \emph{53rd IEEE Conference on Decision and Control}.\hskip 1em plus 0.5em
  minus 0.4em\relax IEEE, 2014, pp. 6271--6278.

\bibitem{Ames2017b}
A.~D. Ames, X.~Xu, J.~W. Grizzle, and P.~Tabuada, ``{Control Barrier Function
  Based Quadratic Programs for Safety Critical Systems},'' \emph{IEEE
  Transactions on Automatic Control}, vol.~62, no.~8, pp. 3861--3876, 2017.

\bibitem{CBFmobile}
L.~Wang, A.~D. Ames, and M.~Egerstedt, ``Safety barrier certificates for
  collisions-free multirobot systems,'' \emph{IEEE Transactions on Robotics},
  vol.~33, no.~3, pp. 661--674, 2017.

\bibitem{Yang2019a}
G.~Yang, B.~Vang, Z.~Serlin, C.~Belta, and R.~Tron, ``Sampling-based motion
  planning via control barrier functions,'' in \emph{Proceedings of the 2019
  3rd International Conference on Automation, Control and Robots}, 2019, pp.
  22--29.

\bibitem{cbfrrt_app_Fainekos_ppr}
K.~Majd, S.~Yaghoubi, T.~Yamaguchi, B.~Hoxha, D.~Prokhorov, and G.~Fainekos,
  ``Safe navigation in human occupied environments using sampling and control
  barrier functions,'' in \emph{2021 IEEE/RSJ International Conference on
  Intelligent Robots and Systems (IROS)}, 2021, pp. 5794--5800.

\bibitem{Reif1979ComplexityOT}
J.~H. Reif, ``Complexity of the mover's problem and generalizations,''
  \emph{20th Annual Symposium on Foundations of Computer Science (sfcs 1979)},
  pp. 421--427, 1979.

\bibitem{Wieland2007}
P.~Wieland and F.~Allg{\"{o}}wer, ``{Constructive safety using control barrier
  functions},'' \emph{IFAC Proceedings Volumes (IFAC-PapersOnline)}, vol.~7,
  no. PART 1, pp. 462--467, 2007.

\bibitem{Xiao2019c}
W.~Xiao and C.~Belta, ``{Control Barrier Functions for Systems with High
  Relative Degree},'' \emph{Proceedings of the IEEE Conference on Decision and
  Control}, pp. 474--479, 2019.

\bibitem{tedrak_reachRRT}
A.~Shkolnik, M.~Walter, and R.~Tedrake, ``Reachability-guided sampling for
  planning under differential constraints,'' in \emph{2009 IEEE International
  Conference on Robotics and Automation}, 2009, pp. 2859--2865.

\bibitem{MERL_invSafeRRT}
A.~Weiss, C.~Danielson, K.~Berntorp, I.~Kolmanovsky, and S.~Di~Cairano,
  ``Motion planning with invariant set trees,'' in \emph{2017 IEEE Conference
  on Control Technology and Applications (CCTA)}, 2017, pp. 1625--1630.

\bibitem{robotaiumGTech}
S.~Wilson, P.~Glotfelter, L.~Wang, S.~Mayya, G.~Notomista, M.~Mote, and
  M.~Egerstedt, ``The robotarium: Globally impactful opportunities, challenges,
  and lessons learned in remote-access, distributed control of multirobot
  systems,'' \emph{IEEE Control Systems Magazine}, vol.~40, no.~1, pp. 26--44,
  2020.

\bibitem{kinodynamic_wSimulationFrwrd}
\BIBentryALTinterwordspacing
Y.~Li, Z.~Littlefield, and K.~E. Bekris, ``Asymptotically optimal
  sampling-based kinodynamic planning,'' \emph{The International Journal of
  Robotics Research}, vol.~35, no.~5, pp. 528--564, 2016. [Online]. Available:
  \url{https://doi.org/10.1177/0278364915614386}
\BIBentrySTDinterwordspacing

\bibitem{Hu2007}
J.~Hu, M.~C. Fu, and S.~I. Marcus, ``{A model reference adaptive search method
  for global optimization},'' \emph{Operations Research}, vol.~55, no.~3, pp.
  549--568, 2007.

\bibitem{Botev2011}
Z.~I. Botev and D.~P. Kroese, ``{The Generalized Cross Entropy Method, with
  Applications to Probability Density Estimation},'' \emph{Methodology and
  Computing in Applied Probability}, vol.~13, no.~1, pp. 1--27, 2011.

\end{thebibliography}

\end{document}